\newcommand{\bmu}{\boldsymbol{\mu}}
\newcommand{\SR}{\mathrm{SR}}
\newcommand{\BSR}{\mathrm{BSR}}
\newcommand{\PB}{\mathrm{PB}}
\newcommand{\BPB}{\mathrm{BPB}}
\newcommand{\wh}{\widehat}
\newcommand{\wt}{\widetilde}
\newcommand{\bem}{\begin{bmatrix}}
\newcommand{\eem}{\end{bmatrix}}
\newcommand{\Expect}{\mathbb{E}}
\newcommand{\Prob}{\mathbb{P}}
\newcommand{\argmax}{\mathop{\rm argmax}}
\theoremstyle{remark}
\theoremstyle{plain}
\newtheorem{lemma}{Lemma}
\newtheorem{theorem}{Theorem}
\newtheorem{assumption}[]{Assumption}
\theoremstyle{definition}
\theoremstyle{plain}
\newtheorem{coro}{Corollary}
\newcommand{\lunder}[1]{{\underset{\raise0.3em\hbox{$\smash{\scriptscriptstyle-}$}}{#1}}}
\newcommand{\indc}[1]{{\mathbf{1}_{\left\{{#1}\right\}}}}
\def\innergetnumber#1[#2]#3{#2}
\def\getnumber{\expandafter\innergetnumber\jobname}
\newcommand{\bbJ}{{\mathbb{J}}}
\newcommand{\bbS}{{\mathbb{S}}}
\newcommand{\bfs}{{\mathbf{s}}}
\newcommand{\bfx}{{\mathbf{x}}}
\begin{document}
\title{Best Arm Identification in Batched Multi-armed Bandit Problems}
\author{Shengyu Cao\footnote{S.C.~is at School of Information Management and Engineering, Shanghai University of Finance and Economics.} \and Simai He\footnote{S.H.~is at School of Information Management and Engineering, Shanghai University of Finance and Economics.} \and Ruoqing Jiang\footnote{R.J.~is at Department of Economics, Columbia University.} \and Jin Xu\footnote{J.X.~is at School of Information Management and Engineering, Shanghai University of Finance and Economics.} \and Hongsong Yuan\footnote{H.Y.~is at School of Information Management and Engineering, Shanghai University of Finance and Economics. E-mail: yuan.hongsong@shufe.edu.cn.}}
\maketitle

\begin{abstract}
Recently multi-armed bandit problem arises in many real-life scenarios where arms must be sampled in batches, due to limited time the agent can wait for the feedback. Such applications include biological experimentation and online marketing. The problem is further complicated when the number of arms is large and the number of batches is small. We consider pure exploration in a batched multi-armed bandit problem. We introduce a general linear programming framework that can incorporate objectives of different theoretical settings in best arm identification. The linear program leads to a two-stage algorithm that can achieve good theoretical properties. We demonstrate by numerical studies that the algorithm also has good performance compared to certain UCB-type or Thompson sampling methods.

\end{abstract}

\begin{keywords}

\end{keywords}


\section{Introduction} \label{sec:intro}
In a classical multi-arm bandit (MAB) problem \citep{LR85}, the learner chooses an action (or arm) at each time period in order to get the maximum expected cumulative reward over all periods. There is a trade-off between exploration, where the learner gather information about the expected rewards of arms, and exploitation, where the learner makes the best decision given current information. A variant of the classical MAB problem is best arm identification (BAI) (or pure exploration), in which the learner is required to recommend an arm with expected reward as high as possible \citep{bubeck2009pure, audibert2010best}. 

Consider an environment with $K$ arms, and the $j$th arm generates rewards according to a population $\Pi_j$ with expectation $\mu_j$. Hereinafter, we denote $\bmu=(\mu_1, \dots, \mu_K)^\top$, and let $\mu_{(1)}\ge \cdots \ge \mu_{(K)}$ be the order statistics of $\mu_j$'s. Let $j^*=\argmax_{1\le j\le K}\mu_j$ be the index of the best arm, and $\mu^*=\max_{1\le j\le K}\mu_j$ be its expected reward. Suppose the recommended arm is $\wh{j}$, then one is interested in the {\it probability of recommending the best arm} $\PB:=\Prob(\mu_{\wh{j}}=\mu^*)$, as well as the {\it simple regret} $\SR:=\mu^*-\mu_{\wh{j}}$. One other important measure is the total number of pulls $T$, which we also refer to as the {\it total sampling cost}. Apparently $\PB$, $\SR$ and $T$ may depend on $\bmu$, but we ignore such dependency in the notation. BAI is studied in several different theoretical frameworks. A common one is the $(\epsilon, \delta)-\mathrm{PAC}$ setting, which aims to find an $\epsilon$-optimal arm (namely an arm with $\mu_{\wh{j}}\ge \mu^*-\epsilon$) with probability at least $1-\delta$. Another popular direction is to pursue simple regret minimization (SRM). Two more prevalent settings are the fixed-confidence (FC) and the fixed-budget (FB) settings. In the FC setting, the learner minimizes $T$ under the constraint $\PB\ge 1-\delta$ for a given parameter $\delta$. In the FB setting, the learner maximizes $\PB$ under the budget constraint $T\le T_0$ for some given constant $T_0$. 

Recently, much attention is drawn to the scenario in which arms must be sampled 
in batches \citep{perchet2016batched}. In such a scenario, the learner is not 
allowed to make decisions within a batch, and the choice of arms for the next 
batch depends on the result of previous batches. Batched bandit problem arises 
in many applications. For instance, in social network platforms, one may be 
interested in finding out which user is the most active in a certain topic of 
interest. When a specific event on this topic occurs, analysts can follow a 
batch of users and record their activities via API. The most 
active user can be recommended after several events \citep{jun2016top}. Another area in which 
batched bandit problem emerges is clinical trials, where the experimenter 
chooses a set of treatments to test in each round of trial, and recommends the most effective treatment at the end of experiment. Other applications include online marketing and crowdsourcing \citep{esfandiari2021regret}. A common type of batch constraint is the $(m, \ell)$-batch constraint, in which the total number of pulls in a batch cannot exceed a constant $m$, and any single arm can be pulled at most $\ell$ times in a batch. In this paper, we restrict our study to the $(K, 1)$-batch setting.

In some of these real-world applications, the number of arms $K$ can be large, 
and the number of batches (or rounds) $R$ is relatively small. In the 
aforementioned social network example, there can be millions of users to follow 
and only a few events to occur due to time constraint for the experiment. With relative few samples one can observe from a single arm, the accuracy of BAI in batched bandits becomes a new challenge. Another challenging problem is how to control the total sampling cost when $K$ is large.

Traditional methods for MAB such as UCB and Thompson sampling have been 
generalized to solve batched bandit problems. The majority of existing algorithms are based on {\it peer-dependent} policies. That is, the decision of any single arm 
depends on the reward history of other arms. For the relatively poor performing 
arms, these algorithms tend to keep them as a future choice because 
there are chances that their expected rewards are high. Therefore, the learner 
is conservative in eliminating arms (eliminating an arm means to stop pulling the arm till 
the end of the time horizon) in early stages, unless there is a strong belief 
of their suboptimality. When $K$ is large and $R$ is relatively small, this strategy 
encounters two difficulties. First, it admits high proportions of arms in early 
batches, leading to high sampling cost. Second, the accuracy of BAI is seriously affected by the fact that $R$ is small. For one example, with $R$ rounds of {\it uniform exploration} for 
1-subgaussian bandits, $1-\PB$ is only upper bounded by $K \exp\left(-R \Delta_{2}^2/4 \right)$, 
where $\Delta_j=\mu^*-\mu_{(j)}$ is the suboptimality gap between the best and 
$j$th best arms \citep{lattimore2020bandit}. This upper bound is at least $K \exp\left\{-O(R) \right\}$ if $\Delta_2=O(1)$. For another example, with the same total sampling cost 
of $KR$, the successive rejects algorithm by \cite{audibert2010best} has an 
upper bound $\exp\left[-KR/\left\{\log(2K)H_2\right\}\right]$, where $H_2=\max_{2\le 
j\le K} j\Delta_{j}^{-2}$. Since $H_2 \gtrsim K$ when $\Delta_j$'s have a uniform bound, this upper bound is at 
least $\exp\left[-O\left\{R/\log(2K)\right\}\right]$. These bounds are hardly useful when 
$R\lesssim \log K$.

The main objective of this paper is to develop efficient algorithms for BAI in batched bandits when $K$ is large. Our way to address the difficulties mentioned in the preceding paragraph is to adopt a more aggressive arm eliminating scheme. When $K$ is large, the $\mu_j$'s are densely distributed if they are contained in a bounded range. Then the consequence of eliminating good arms is less severe, because there are many other arms that are almost equally good. On the other hand, the majority of arms are far from optimal, hence eliminating them early can greatly reduce overall sampling cost. This fact is more conspicuous under a Bayesian framework, where $\mu_1, \dots, \mu_K$ are generated independently according to some prior distribution. Furthermore, when $R$ is small, it is too demanding to identify the exact best arm with a very high accuracy. In this case, one should set a more realistic goal of recommending a good arm with low sampling cost.

We propose a two-stage algorithm LP2S, that can quickly eliminate arms in its first stage. The key feature of this stage is adopting a {\it peer-independent} policy, in which the decision of each arm is based only on its own reward history. The policy is derived from a linear program (LP) that can be regarded as a relaxation of an optimization problem for deriving peer-dependent policies. With a low sampling cost, the expected number of arms can be decreased to some $L\ll K$ after stage 1. In stage 2, we simply apply uniform exploration among all surviving arms. It is hoped that the sacrifice in optimality in stage 1 is small, and overall we can recommend a sufficiently good arm at the end of stage 2. 

\smallskip

\noindent \textbf{Contributions.} The main contributions of this paper are as follows.
\begin{enumerate}
	\item[] 1) We establish a universal optimization framework, that can include many existing methods, for deriving peer-dependent policies. The optimization framework is further relaxed to an LP framework that can generate peer-independent policies. Building such an LP framework and establishing connections between peer-dependent and peer-independent policies is unprecedented in the literature. 
	\item[] 2) Under the introduced LP framework, we specify four different settings, namely PAC, SRM, FC and FB. The PAC ans SRM settings are particularly suitable for the large $K$ small $R$ case.
	\item[] 3) We propose a two-stage algorithm LP2S, which has good theoretical properties and strong numerical performances. The first stage of the algorithm is a peer-independent procedure induced from LP, and can efficiently select good arms by setting thresholds on cumulative rewards of individual arms. The two-stage algorithm incurs an expected total sampling cost of $O(L h(R))$, where $h(R)$ is at most a polynomial of $R$. We also obtain reasonable upper bounds for different variants of LP.
\end{enumerate}

\subsection{Related Work}

\textbf{BAI.} BAI without batch constraints has been studied extensively under 
different theoretical frameworks in the last few years. 
\begin{itemize}
	\item Under the $(\epsilon, \delta)-\mathrm{PAC}$ setting, a lower bound of 
	$\Omega\left(K\log\{O(1/\delta)\}/\epsilon^{2}\right)$ for the sampling 
	cost is obtained by \cite{evendar2002pacbounds} and 
	\cite{mannor2004sample}, and median elimination 
	\citep{evendar2002pacbounds} is proposed to achieve this lower bound. 
	\cite{kalyanakrishnan2012pac} propose the LUCB algorithm for the top-$k$ 
	arm recommendation problem under $(\epsilon, \delta)-\mathrm{PAC}$ setting.
	
	\item In simple regret analysis, studies include \cite{LLR16} who obtain 
	minimax regret bound, \cite{wuthrich2021regret} who establish regret bounds 
	for variants of expected improvement and UCB algorithms, and 
	\cite{zhao2023revisiting} who establish bounds for sequential halving 
	algorithm. 
	
	\item For the FC setting, existing algorithms include exponential-gap elimination \citep{KKS13}, LUCB \citep{jamieson2014fixedconfidence}, lil'UCB \citep{JMNB14} and so on. The lower bound for the sampling cost is shown to be $(2-4\delta)\sum_{j=2}^K \Delta_j^{-2} \log\log \Delta_j^{-2}$ by \cite{JMNB14}. Furthermore, \cite{KCG16} propose key measures to represent the complexity of the problem in both FB and FC settings. 
	
	\item For the FB setting, algorithms such as successive rejects \citep{audibert2010best}, sequential halving \citep{KKS13}, and sequential elimination \citep{SNT17} have been proposed. \cite{carpentier2016tight} show that the fundamental lower bound for $1-\PB$ is $\exp\left\{-T/(H\log K)\right\}$ with $H=\sum_{j=2}^K \Delta_j^{-2}$.

\end{itemize}
Although our paper is not devoted to any of the four settings, our method can be tailored to reflect the objective of any one of the four. Furthermore, we note that some of the aforementioned papers already reveal features of batch pulling. For instance, the exponential-gap elimination algorithm and sequential halving algorithm in \cite{KKS13} both include steps that sample a group of arms multiple times before the next decision. Nonetheless, their studies do not center around the batch constraints, especially when $R\ll K$.

\noindent \textbf{Batched bandits.} The batched bandit problem is first studied under the classical exploration-exploitation scheme. \cite{perchet2016batched} use an explore-then-commit policy 
for a two-arm bandit with sub-Gaussian rewards, and obtain upper and lower bounds for the cumulative regret . \cite{perchet2016batched}, along with studies such as \cite{jin2021anytime, jin2021double} assume a static grid in which the batch sizes need to be pre-determined, whereas several other papers \citep{gao2019batched, esfandiari2021regret} allow adaptive grids. A fundamental question is how many batches are needed to achieve the optimal cumulative regret. \cite{gao2019batched} show that this number is $\Omega(\log \log T)$ for minimax optimality, and $\Omega(\log T/\log \log T)$ for problem-dependent optimality. Algorithms proposed in these papers all belong to the UCB category. On the other hand, \cite{kalkanli2021abatched, kalkanli2023asymptotic} and \cite{karbasi2021thompson} introduce the batched Thompson sampling. 

Studies that focus solely on BAI in batched bandits are relatively scarce. \cite{jun2016top} propose the BatchRacing algorithm for the FC setting, and the BatchSAR algorithm for the FB setting. They show that the sampling cost has an extra $\log K$ term compared to the non-batch scenario. \cite{agarwal2017limit} consider the FC setting and propose an aggressive elimination algorithm that only requires $\Theta(\log^* K)$ batches. \cite{komiyama2021optimal} propose a two-stage exploration algorithm and carry out simple regret analysis under Bayesian setting. In Table \ref{tab:literature}, we provide a summary of the aforementioned papers in batched bandits as well as our paper. In terms of basic settings, \cite{komiyama2021optimal} is closest to our study. 

\begin{table}[h]
	\resizebox{\textwidth}{!}{
		\begin{tabular}{|ccccc|}
			\hline
			\multirow{2}{*}{\textbf{Article}} &
			\multirow{2}{*}{\textbf{BAI}} & \textbf{Batch} &
			\textbf{Bayesian} & \multirow{2}{*}{\textbf{Algorithm}} \\
			&  & \textbf{grid} & \textbf{bandit} & \\ 
			\hline
			\cite{perchet2016batched} & no & static & no 
			& UCB type \\ \hline
			\cite{gao2019batched} & no & static/adaptive & no & UCB type \\ \hline
			\cite{jin2021anytime, jin2021double} & no & static & no &  UCB type \\ \hline
			\cite{esfandiari2021regret} & no & adaptive & no &  UCB type \\ \hline 
			\cite{kalkanli2021abatched, kalkanli2023asymptotic} & no & adaptive & yes & Thompson \\ \hline 
			\cite{karbasi2021thompson} & no & adaptive & yes & Thompson \\ \hline 
			\cite{jun2016top} & yes & adaptive & no &
			UCB type \\ \hline
			\cite{agarwal2017limit} & yes & adaptive & no & sample mean based \\ \hline
			\cite{komiyama2021optimal} & yes & adaptive & yes & UCB type \\ \hline 
			This paper & yes & adaptive & yes & novel \\ \hline 
	\end{tabular}}
	\caption{Summary of literature on batched bandit problem.}
	\label{tab:literature}
\end{table}

\noindent \textbf{Bayesian bandits.} The analysis in this paper relies heavily 
on the Bayesian assumption. In Bayesian bandits, \cite{lai1987adaptive} 
first generalizes UCB method to Bayesian bandits, and obtains asymptotic lower bound for Bayesian cumulative regret. \cite{G79} introduces the seminal Gittins index for bandits with discounts. \cite{kaufmann2012bayesian} 
extend classical UCB to Bayes-UCB algorithm. \cite{russo2018learning} derive $O(1/\sqrt{T})$ bounds for Bayesian simple regret. \cite{russo2020simple} proposes several variants Thompson sampling. \cite{shang2020fixed} propose a sampling rule inspired by \cite{russo2020simple} in FC setting for bandits with Gaussian rewards.


\section{Optimization and Linear Programming Frameworks} \label{sec:opt-lp}
\subsection{Problem Setup} \label{subsec:setup}
For simplicity, we consider Bernoulli stochastic bandits where the population $\Pi_j$ is Bernoulli$(\mu_j)$, although our method extends naturally to discrete populations with finite support. We restrict our study to the $(K,1)$-batch setting, and constrain the number of batches to be a pre-determined number $2R$. We assume $K\to \infty$, $R\to \infty$, but do not specify the relative order of $R$ and $K$ for now. In the $r$th batch, the learner decides the set of arms $\varphi_r=\left\{\wh{j}_{r,1},\dots, \wh{j}_{r,b_r}\right\} \subset \{1, \dots, K\}$ to pull based on the history of all decisions and rewards, and gets rewards $\bfx_r=\left(x_{r,\wh{j}_{r,1}}, \dots, x_{r, \wh{j}_{r,b_r}}\right)^\top$. The total number of pulls is $T=\sum_{r=1}^{2R} b_r$.

We study the problem under a Bayesian setting, in which $\mu_j$'s are sampled independently from some prior $\pi$ with finite moments up to order $R$. We define {\it Bayesian probability of recommending the best arm}
\begin{align*}
	\BPB=\Expect^{\bmu} (\PB),
\end{align*}
and {\it Bayesian simple regret}
\begin{align*}
	\BSR=\Expect^{\bmu} (\SR),
\end{align*}
where $\Expect^{\bmu}$ is the expectation over the joint distribution of $\bmu$.

\subsection{An Optimization Framework for Peer-dependent Policies} \label{subsec:opt}
In Bernoulli bandits, assume any fixed single arm (we name it the {\it focal arm} hereinafter) with expected reward $\mu$ has generated rewards $(x_1, \dots, x_r)$ up to round $r$.  Since the cumulative reward $s=\sum_{t=1}^r x_{t}$ is a sufficient statistic for $\mu$, the learner only needs to record $s$, rather than the whole reward path $(x_1, \dots, x_r)$. We introduce an additional state $F$ to denote that the arm is already eliminated up to round $r$. We assume that, if the arm is in state $F$, it is no longer a candidate in the remaining rounds. This results in a state space $\bbS_r=\left\{0,1,\dots, r, F\right\}$. To consider the states of other arms, let
\begin{align*}
	\bbS_{r}^{-}=\left\{\bfs_{r}^{-}=(s_1,\dots, s_{K-1})\in \bbS_r^{K-1}. 
	\right\}
\end{align*}

A {\it peer-dependent policy} is one in which actions for a single arm depends on outputs of other arms. In a peer-dependent policy, the action of the focal arm in round $r+1$ is a function $\wt{a}(r,s,\bfs_{r}^{-}) \in [0,1]$, representing the probability of pulling the arm. Most existing methods for batched bandit problems rely on peer-dependent policies, and many align with explicit forms of $\wt{a}(r,s,\bfs_{r}^{-})$. A few examples are given in the following.

\noindent {\bf Example 1.} In the median elimination algorithm \citep{evendar2002pacbounds}, the elimination rule corresponds to
\begin{align*}
	\wt{a}(r,s,\bfs_{r}^{-}) = \begin{cases}
		1, & \mathrm{\ if\ } s \geq \mathrm{median}\left\{s_1, \dots, s_K\right\}, \\
		0, & \mathrm {\ otherwise}.	
	\end{cases}	
\end{align*}
for elimination rounds, and $\wt{a}(r,s,\bfs_{r}^{-})=1$ for rounds without elimination.

\noindent {\bf Example 2.} In the BatchRacing algorithm \citep{jun2016top}, the upper and lower confidence bounds (UCBs and LCBs) of all candidate arms are computed at first
\begin{align*}
	U=s/r+ D\left(r,\sqrt{\delta/(6K)}\right), & \quad L=s/r- D\left(r,\sqrt{\delta/(6K)}\right), \\
	U_j=s_j/r+ D\left(r,\sqrt{\delta/(6K)}\right), & \quad L_j=s_j/r- D\left(r,\sqrt{\delta/(6K)}\right), \mathrm{\ for\ } s_j\neq F. 
\end{align*}
Then the acceptance-rejection rule renders
\begin{align*}
	\wt{a}(r,s,\bfs_{r}^{-}) = \begin{cases}
		0, & \mathrm{\ if \ } L>\max_{s_j\neq F} U_j \mathrm{\ (acceptance)}, \\
		0, & \mathrm{\ if \ } U< \max_{s_j \neq F} L_j \mathrm{\ (rejection)} \\
		1, & \mathrm {\ otherwise}.
	\end{cases}
\end{align*}

\noindent {\bf Example 3.} In the two stage exploration algorithm \citep{komiyama2021optimal}, the UCBs and LCBs arms are computed in the last round of the first stage (i.e. round $r=qT/K$)
\begin{align*}
	L_{j}=\frac{s_j}{qT/K} - \sqrt{\frac{K\log T}{qT}}, \quad U_{j}=\frac{s_j}{qT/K}+\sqrt{\frac{K\log T}{qT}}.
\end{align*}
Then decision rule is in line with
\begin{align*}
	\wt{a}(r,s,\bfs_{r}^{-}) = \begin{cases}
		1, & \mathrm{\ if \ }U \geq \max_{s_j\neq F} L_{j}, \\
		0, & \mathrm {\ otherwise}
	\end{cases}
\end{align*}
for $r=qT/K$. For $r<qT/K$ and $r>qT/K$, the rule is simply $\wt{a}(r,s,\bfs_{r}^{-})=1$. 

\medskip 

For any peer-dependent policy, define for round $0\le r\le R$
\begin{align*}
	\wt{P}(r,s,\bfs_{r}^{-}) & =\Prob\left(\{\mathrm{arms\ in\ state\ } (s,\bfs_{r}^{-})\}\right), \\
	\wt{P}_1(r,s,\bfs_{r}^{-}) & =\Prob\left(\{\mathrm{arms\ in\ state\ } (s,\bfs_{r}^{-})\}\cap \{x_r=1\} \right), \\
	\wt{P}_0(r,s,\bfs_{r}^{-}) & =\Prob\left(\{\mathrm{arms\ in\ state\ } (s,\bfs_{r}^{-})\}\cap \{x_r=0\} \right).
\end{align*}
Clearly, the three probabilities satisfy
\begin{align}
	\wt{P}(r,s,\bfs_{r}^{-})=\wt{P}_1(r,s,\bfs_{r}^{-})+\wt{P}_0(r,s,\bfs_{r}^{-}). \label{eq:P0P1_sum_dependent}
\end{align}
For any $0\le s\le r$ and $\bfs_{r+1}^{-}\in \bbS_{r+1}^{-}$, we have
\begin{align}
	& \wt{P}_1\left(r+1,s+1,\bfs_{r+1}^{-}\right)  = 
	\sum_{\bfs_{r}^{-}\in \bbS_{r}^{-}} \wt{P}\left(r,s,\bfs_{r}^{-}\right) 
	\wt{a} \left(r,s,\bfs_{r}^{-}\right) q(r,s) Q\left(\bfs_{r}^{-}, 
	\bfs_{r+1}^{-}\right), \label{eq:P1_recursion_dependent}\\
	& \wt{P}_0\left(r+1,s,\bfs_{r+1}^{-}\right)  = 
	\sum_{\bfs_{r}^{-}\in \bbS_{r}^{-}} \wt{P}\left(r,s,\bfs_{r}^{-}\right) 
	\wt{a} \left(r,s,\bfs_{r}^{-}\right) \left\{1-q(r,s)\right\} 
	Q\left(\bfs_{r}^{-}, \bfs_{r+1}^{-}\right) \label{eq:P0_recursion_dependent},
\end{align}
where $q(r,s)=\Expect\left\{\Prob(x_{r}=1\mid r,s)\right\}=\Expect(\mu\mid 
r,s)$ is the posterior mean of $\mu$ given $s$ successes in $r$ pulls, and 
$Q\left(\bfs_{r}^{-}, 
\bfs_{r+1}^{-}\right)= \Expect \left\{\Prob\left(\bfs_{r}^{-}\to 
\bfs_{r+1}^{-} \mid \bfs_{r}^{-} \right)\right\}$ is the posterior mean of the probability of transition $\bfs_{r}^{-}\to \bfs_{r+1}^{-}$. To explain \eqref{eq:P1_recursion_dependent}, note that the transition from state $\left(s,\bfs_{r}^{-}\right)$ in round $r$ to $\left(s+1,\bfs_{r+1}^{-}\right)$ with $x_{r+1}=1$ in round $r+1$ requires three independent events: the focal arm is pulled, its instant reward is 1, and the other arms transit from $\bfs_{r}^{-}$ to $\bfs_{r+1}^{-}$. The terms $\wt{a}\left(r,s,\bfs_{r}^{-}\right)$, $q(r,s)$ and $Q\left(\bfs_{r}^{-}, \bfs_{r+1}^{-}\right)$ are the probabilities of the three events. Similar explanation holds for \eqref{eq:P0_recursion_dependent}.

Any problem that admits peer-dependent policies can be formulated as 
\begin{align}
	\min & \quad f\left(\left\{\wt{P}\left(r,s,\bfs_{r}^{-}\right), 0\le r\le R, 0\le s\le r, \bfs_{r}^{-}\in \bbS_{r}^{-}\right\} \right) \nonumber \\
	\mathrm{s.t.} & \ \eqref{eq:P0P1_sum_dependent}, \eqref{eq:P1_recursion_dependent} \mathrm{\ and \ } 
	\eqref{eq:P0_recursion_dependent} \mathrm{\ hold,} \nonumber \\
	& \wt{P}_1\left(0,0,\mathbf{0}\right) = 1, \quad \wt{P}_1\left(r+1,0,\bfs_{r}^{-}\right) = 0, \quad \wt{P}_0\left(r,r,\bfs_{r}^{-}\right) = 0, \quad 0\le r\le R, \label{eq:boundary_constraints_dependent} \\
	& 0\le a\left(r,s,\bfs_{r}^{-}\right)\le 1, \label{eq:action_constraint_dependent} \\
	& \mathrm{Additional\ problem-specific\ constraints,} \label{eq:additional_constraint_dependent}
\end{align}
where $f$ is a function expressing the objective of the problem, \eqref{eq:boundary_constraints_dependent} are natural boundary constraints, and \eqref{eq:additional_constraint_dependent} are constraints customized to specific purposes in the problem. We use {\bf OPT-dep} to denote this optimization problem.

\subsection{An LP framework for peer-independent policies} \label{subsec:lp}
Apparently, OPT-dep is {\it intractable} because of its exploding state space. In Bayesian bandits however, the distributions of the outputs of non-focal arms can be inferred without observing their realized rewards. In particular, if a non-focal arm $j$ is pulled $r$ times, then the marginal distribution of its cumulative reward $s_{j}$ can be obtained since we know the prior distribution of $\mu_j$. Then, any action of the focal arm that depends on $s_j$ can be predicted in distribution. In other words, not much information is lost by ignoring the outputs of non-focal arms.

Following this idea, we try to relax OPT-dep to an optimization problem that only relies on the states of the focal arm. For any feasible solution of OPT-dep, consider aggregating variables
\begin{align*}
	P(r,s) & =\sum_{\bfs_{r}^{-}\in \bbS_{r}^{-}}  
	\wt{P} \left(r,s,\bfs_{r}^{-}\right) = \Prob\left(\{\mathrm{focal\ arm\ in\ 
		state\ } s \mathrm{\ in\ round \ } r\}\right), \\
	P_i(r,s) & =\sum_{\bfs_{r}^{-}\in \bbS_{r}^{-}}  
	\wt{P}_i \left(r,s,\bfs_{r}^{-}\right) =\Prob\left(\{\mathrm{focal\ arm\ 
		in\ state\ } s \mathrm{\ in\ round \ } r \}\cap \{x_r=i\} \right), \quad i=0,1.
\end{align*}
Then we immediately have
\begin{align}
	 P(r,s) = P_1(r,s) + P_0(r,s), \quad 0\le r\le R, 0\le s\le r. \label{eq:P0P1_sum} 
\end{align}
From \eqref{eq:P1_recursion_dependent} we get 
\begin{align*}
	P_1(r+1,s+1) & =\sum_{\bfs_{r+1}^{-}\in \bbS_{r+1}^{-}}  
	\wt{P}_1 \left(r+1,s+1,\bfs_{r+1}^{-}\right), \\
	& = \sum_{\bfs_{r+1}^{-}\in \bbS_{r+1}^{-}} \sum_{\bfs_{r}^{-}\in 
		\bbS_{r}^{-}} \wt{P}\left(r,s,\bfs_{r}^{-}\right) 
	\wt{a}\left(r,s,\bfs_{r}^{-}\right) q(r,s) Q\left(\bfs_{r}^{-}, 
	\bfs_{r+1}^{-}\right), \\
	& = q(r,s) \sum_{\bfs_{r}^{-}\in 
		\bbS_{r}^{-}} \wt{P}\left(r,s,\bfs_{r}^{-}\right) 
	\wt{a}\left(r,s,\bfs_{r}^{-}\right) \sum_{\bfs_{r+1}^{-}\in \bbS_{r+1}^{-}} 
	Q\left(\bfs_{r}^{-}, 
	\bfs_{r+1}^{-}\right).
\end{align*}
Since $\sum_{\bfs_{r+1}^{-}\in \bbS_{r+1}^{-}} Q\left(\bfs_{r}^{-}, \bfs_{r+1}^{-}\right)= \Expect \sum_{\bfs_{r+1}^{-}\in \bbS_{r+1}^{-}} \Prob\left(\bfs_{r}^{-}\to \bfs_{r+1}^{-} \mid \bfs_{r}^{-} \right)=1$, we get
\begin{align}
	P_1(r+1,s+1) = q(r,s) \sum_{\bfs_{r}^{-}\in 
		\bbS_{r}^{-}} \wt{P}\left(r,s,\bfs_{r}^{-}\right) 
	\wt{a}\left(r,s,\bfs_{r}^{-}\right). \label{eq:P1_recursion_middle}
\end{align}
By the same argument, we have
\begin{align}
	P_0(r+1,s) = \left\{1-q(r,s)\right\} \sum_{\bfs_{r}^{-}\in 
		\bbS_{r}^{-}} \wt{P}\left(r,s,\bfs_{r}^{-}\right) 
	\wt{a}\left(r,s,\bfs_{r}^{-}\right). \label{eq:P0_recursion_middle}
\end{align}
From \eqref{eq:P1_recursion_middle} and \eqref{eq:P0_recursion_middle}, we 
get
\begin{align}
	\frac{P_1(r+1, s+1)}{q(r,s)} = \frac{P_0(r+1, s)}{1-q(r,s)}. 
	\label{eq:P_recursion_middle}
\end{align}
Since $0\le \wt{a}\left(r,s,\bfs_{r}^{-}\right)\le 1$, we further obtain from 
\eqref{eq:P1_recursion_middle} and \eqref{eq:P0_recursion_middle}
\begin{align}
	P_1(r+1,s+1) & \le q(r,s) \sum_{\bfs_{r}^{-}\in \bbS_{r}^{-}} 
	P\left(r,s,\bfs_{r}^{-}\right) =q(r,s) P(r,s), 
	\label{eq:P1_recursion_middle2} \\
	P_0(r+1,s) & \le \left\{1-q(r,s)\right\} \sum_{\bfs_{r}^{-}\in \bbS_{r}^{-}} 
	P\left(r,s,\bfs_{r}^{-}\right) = \left\{1-q(r,s)\right\} P(r,s). 
	\label{eq:P0_recursion_middle2}
\end{align}
Combining \eqref{eq:P_recursion_middle}, \eqref{eq:P1_recursion_middle2} and 
\eqref{eq:P0_recursion_middle2}, we have
\begin{align}
	\frac{P_1 (r+1, s+1)}{q(r,s)} = \frac{P_0(r+1,s)}{1-q(r,s)} \le P(r,s), \quad 0\le r\le 
	R-1, 0\le s\le r.
	\label{eq:P_recursion}
\end{align}
Let 
$$a(r,s)= \frac{P_1 (r+1, s+1)}{q(r,s)P(r,s)} = \frac{P_0(r+1,s)}{\left\{1-q(r,s)\right\} P(r,s)}\in 
[0,1]$$ 
be the action of the focal arm in 
round $r+1$. Then the set of actions $A=\{a(r,s): 0\le r\le R, 0\le s\le r\}$ 
defines a {\it peer-independent policy}, where actions for a single arm do not 
depend on outputs of any other arm.

Any problem that admits peer-independent policies can be formulated as 
\begin{align}
	\min & \quad f\left(\left\{P(r,s), 0\le r\le R, 0\le s\le r\right\} \right) \nonumber \\ 
	\mathrm{s.t.} & \ \eqref{eq:P0P1_sum} \mathrm{\ and\ } \eqref{eq:P_recursion} \mathrm{\ hold,}  \label{eq:first_constraint} \\
	& P_1(0,0)=1, \quad P_1(r+1,0)=0, \quad P_0(r,r)=0, \quad 0\le r\le R, \label{eq:boundary_constraints} \\  
	& \mathrm{Additional\ problem-specific\ constraints.} \label{eq:additional_constraint_independent}
\end{align}
We use {\bf OPT-ind} to denote this optimization problem. Figure 
\ref{fig:recursion} demonstrates a binomial tree that represents the recursion of probabilities in OPT-ind.

\begin{figure}[h!]
	\centering
	\scalebox{0.45}{\includegraphics{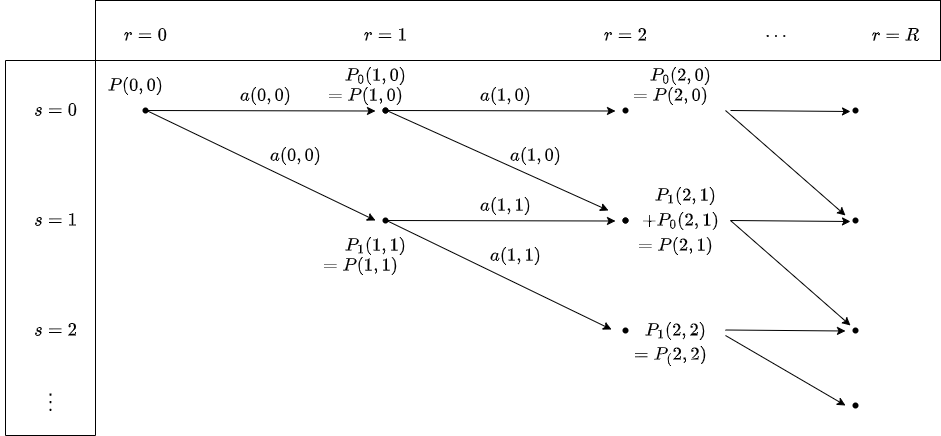}}
	\caption{Recursion of probabilities in OPT-ind.}
	\label{fig:recursion}
\end{figure}

We consider special a case of OPT-ind. Assume $f$ is linear in $P(r,s), 0\le 
r\le R, 0\le s\le r$. Also fix the following two additional constraints
\begin{itemize}
	\item Additional constraint 1:
	\begin{align}
		\sum_{s=0}^R P(R,s)=L/K, \label{eq:survive_constraint}
	\end{align}
	where $L$ is a given number. This constraint implies that the probability of an arm surviving to the last round is $L/K$.
	
	\item Additional constraint 2:
	\begin{align}
		\sum_{s=0}^R w(s) P(R,s) \ge (1-\delta_0) \sum_{s=0}^R P(R,s), \label{eq:conditional_constraint}
	\end{align}
	where $\delta_0$ is a given constant in $[0,1]$, and $w(s)$ is a known function 
	non-decreasing in $s$. When $\delta_0$ is small, this constraint assures that the end probabilities $P(R,s)$ are tilted towards large $s$ values, compared to the standard binomial probabilities $\Expect^{\mu} \Prob(S=s)$ with $S\sim \mathrm{Binomial}(R,\mu)$.
\end{itemize}
Then OPT-ind becomes a linear program, which we refer to as {\bf LP-ind}.

In this paper, we mainly consider three variants of LP-ind. Namely,
\begin{enumerate}
	\item PAC setting, where we take 
	\begin{align*}
		f & = \sum_{r=1}^R \sum_{s=0}^r P(r,s), \\ 
		w(s) & =\Prob(\mu\ge \mu_0\mid R,s),  
	\end{align*}
	for some given $\mu_0$. Note that for each $r$, $\sum_{s=0}^r P(r,s)$ is the probability of surviving till the end of round $r$, hence $f$ is the expected sampling cost for one arm. $w(s)$ is the posterior probability of an arm having expected reward $\ge \mu_0$, given that it generated $s$ successes in $R$ pulls. With this choice of $w(s)$, an arm that survives till the end of round $R$ should at least be an $(1-\mu_0)$-optimal arm. We denote this problem {\bf LP-PAC}. When $R \ll K$, it is difficult to find the exact best arm, so LP-PAC can be used to find good arms efficiently.
	
	\item SRM setting, where we choose
	\begin{align*}
		f & = \sum_{r=1}^R \sum_{s=0}^r P(r,s), \\
		w(s) & = \Expect \mu^*- \Expect\left(\mu \mid R,s\right).
	\end{align*}
	This choice of $w(s)$ matches the goal of keeping the simple regret as low as possible. We denote this problem {\bf LP-SRM}.
	
	\item FC setting, where we use 
	\begin{align*}
		f & =\sum_{r=1}^R \sum_{s=0}^r P(r,s) \\
		w(s) & = \Prob(\mu=\mu^*\mid R,s).
	\end{align*}
	We denote this problem {\bf LP-FC}. Since enough observations must be made in order to find the best arm with good probability,  LP-FC is typically suitable for the case $R \gtrsim K$.
\end{enumerate}
One can also consider the FB setting, where $f =-\sum_{s=0}^R P(R,s) \Prob\left\{\mu=\mu^*\mid R,s \right\}$, $w(s) \equiv 1$, and exert an additional constraint $\sum_{r=1}^R \sum_{s=0}^r P(r,s)\le T_0$. However, we do not include this problem in our study for the sake of maintaining the analysis in the same framework.

\subsection{An LP-induced two-stage algorithm}\label{subsec:algo}
Let $\left\{P^*(r,s), P_1^*(r,s), P_0^*(r,s): 0\le r\le R, 0\le s\le r 
\right\}$ be an optimal solution of LP-ind. Then it induces a set of actions
\begin{align}
	a^*(r,s)= \frac{P_1^*(r+1,s+1)}{q(r,s) P^*(r,s)} = 
	\frac{P_0^*(r+1,s)}{\left\{1-q(r,s)\right\} P^*(r,s)}, \quad 0\le r\le R-1, 0\le s\le r. \label{eq:optimal_action}
\end{align}
We now propose LP2S, a two-stage algorithm consisting of an elimination stage based on the LP-induced actions $\left\{a^*(r,s): 0\le r\le R, 0\le s\le r \right\}$, and a finer exploration stage. The two stages are described by Algorithms \ref{alg:stage1} and \ref{alg:stage2} respectively.

\begin{algorithm}[H]
	\begin{algorithmic}[1]
		\STATE Initialization: $\bbJ_0=\{1,\dots,K\}$, $s_j=0$ for $j\in 
		\bbJ_0$.
		\FOR{$r=1$ to $R$}
		\FOR{ each $j \in \bbJ_{r-1}$}
		\STATE Generate an independent $z_{r,j}\sim 
		\mathrm{Bernoulli}\left\{a^*(r,s_j)\right\}$.
		\IF{$z_{r,j}=1$}
		\STATE Pull arm $j$, get reward $x_{r,j}$. 
		\STATE Update $s_j\leftarrow s_j+x_{r,j}$.
		\ENDIF
		\ENDFOR
		\STATE Update $\bbJ_{r}=\{j\in \bbJ_{r-1}: z_{r,j}=1\}$.
		\ENDFOR
		\STATE Output: $\bbJ_{R}$, the set of surviving arms.
	\end{algorithmic}
	\caption{Stage 1: Implement optimal policy induced by LP-ind}
	\label{alg:stage1}
\end{algorithm}

\begin{algorithm}[H]
	\begin{algorithmic}[1]
		\STATE Run $R$ rounds of uniform exploration among $\bbJ_{R}$. 
		Output arm $\wh{j}$ with the highest cumulative reward.
	\end{algorithmic}
	\caption{Stage 2: Run uniform exploration for arms surviving from stage 1}
	\label{alg:stage2}
\end{algorithm}

In stage 1 (specified by Algorithm \ref{alg:stage1}), we implement actions $\left\{a^*(r,s): 0\le r\le R, 0\le s\le r \right\}$ for each arm separately. In particular, in round $r$, if arm $j$ is not eliminated yet and has cumulative reward $s_j$, we pull it with probability $a^*(r,s_j)$ and eliminate it with probability $1-a^*(r,s_j)$. We then update the cumulative rewards of all remaining arms and proceed to the next round. At the end of round $R$, we are left with a set $\bbJ_{R}$ of arms. Note that each arm survives independently with probability $L/K$ according to additional constraint 1 expressed by \eqref{eq:survive_constraint}, and $J=|\bbJ_{R}|$ follows a Binomial$(K,L/K)$ distribution.

If there are no arms surviving stage 1 (that is, $J=0$), our algorithm terminates and we recommend a random arm. If $J>0$, we simply run $R$ rounds of {\it uniform exploration} among $\bbJ_{R}$ (i.e. pulling each arm $R$ times) in stage 2, and recommend the arm with the highest cumulative reward in this round, as shown by Algorithm \ref{alg:stage2}.

With suitable parameters, stage 1 essentially carries out a rather aggressive elimination scheme to quickly filter out ``bad'' arms. The number of arms $J$ surviving stage 1 is random but its expected value is $L$. Stage 2 further explores among the ``good'' arms, with an expected sampling cost of $LR$. The parameter $L$ needs to be chosen carefully. If $L$ is too small, then the risk of no arms surviving stage 1 becomes significant. On the other hand,  a too large value of $L$ causes a high sampling cost in stage 2.


\section{Theoretical Results}  \label{sec:theory}
\subsection{Property of LP-ind Induced Policy} \label{subsec:threshold}
It is intuitive that the action $a^*(r,s)$ defined by \eqref{eq:optimal_action} should be non-decreasing in $s$. That is, arms with higher cumulative rewards should enjoy a high probability of getting pulled next. Theorem \ref{thm:LP_thresh} states that it is in fact a ``threshold'' policy.
\begin{theorem} \label{thm:LP_thresh}
	Suppose LP-ind is feasible. Then there exists an optimal solution 
	$$\left\{P^*(r,s), P_1^*(r,s), P_0^*(r,s): 0\le r\le R, 0\le s\le r 
	\right\}$$ 
	such that the actions $\left\{a^*(r,s): 0\le r\le R-1, 0\le s\le r \right\}$ defined by \eqref{eq:optimal_action} satisfy
	\begin{align*}
		a^*(r,s)= \begin{cases}
			0,\quad s<s^*(r) \\
			1, \quad s>s^*(r)
		\end{cases}
	\end{align*}
	with some $0\le s^*(r)\le r$ non-decreasing in $r$ for $0\le r\le R-1$.
\end{theorem}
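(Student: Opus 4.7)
The plan is to establish the threshold structure by a pair of swap arguments on any optimal solution of LP-ind: first within each round $r$ (threshold in $s$), then across rounds (monotonicity of $s^*(r)$ in $r$).

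Let $\{P^*, P_1^*, P_0^*\}$ be an optimal solution and let $\{a^*(r,s)\}$ be the induced actions via \eqref{eq:optimal_action}. Fix a round $r$ and suppose there exist $s_1 < s_2 \le r$ at which $a^*(r, s_1) > 0$ and $a^*(r, s_2) < 1$, i.e.~the action fails to be threshold. I would then decrease $a^*(r, s_1)$ by some $\alpha > 0$ and raise $a^*(r, s_2)$ by some $\beta > 0$, with $\alpha P^*(r,s_1) = \beta P^*(r,s_2)$ chosen so that the total mass flowing into round $r+1$ from these two states is conserved. Through the relations $P_1^*(r+1, s_i+1) = q(r, s_i) a^*(r, s_i) P^*(r, s_i)$ and $P_0^*(r+1, s_i) = (1-q(r,s_i)) a^*(r,s_i) P^*(r,s_i)$, this corresponds to explicit modifications of $P_1^*$ and $P_0^*$ at round $r+1$; the perturbation is then propagated forward to rounds $r+2, \ldots, R$ by holding the downstream actions $\{a^*(r', \cdot)\}_{r' > r}$ fixed and re-solving the recursion \eqref{eq:P_recursion}, which is linear in $P$.

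Three things must then be verified: (a) the survival constraint \eqref{eq:survive_constraint} is preserved; (b) the weighted constraint \eqref{eq:conditional_constraint} is preserved; and (c) the objective $f$ does not increase. Part (a) follows from the mass-balance choice, since the probability of eventual survival starting from state $(r, s_i)$ once pulled depends only on the fixed downstream actions and on $q(r, s_i)$. For (b) I would appeal to the monotone likelihood ratio property of the Bernoulli posterior: starting from state $(r+1, s)$ and following the downstream policy, the distribution of the endpoint $S_R$ conditional on survival is stochastically non-decreasing in $s$; since $w(\cdot)$ is non-decreasing in $s$ by assumption, shifting retention mass from $s_1$ to $s_2$ can only increase $\sum_s w(s) P^*(R, s)$ and so preserves the constraint. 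Part (c) requires tracking how $\sum_s P^*(r', s)$ evolves for $r' > r$; the mass-balance choice ensures that round-by-round totals are preserved, leaving the objective unchanged. Thus the swap produces another optimal solution whose round-$r$ action is ``more threshold-like''. Iterating this swap at each round eventually yields an optimal solution whose action at round $r$ is $0$ below a cutoff $s^*(r)$ and $1$ above it, with at most one fractional intermediate value.

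For the monotonicity claim, suppose the thresholds fail it: $s^*(r+1) < s^*(r)$ for some $r$. Then there is an integer $s$ with $s^*(r+1) \le s < s^*(r)$ at which arms are eliminated at round $r$, while the arms that do reach $(r+1, s)$ or $(r+1, s+1)$ are retained at round $r+1$. A second swap that transfers retention from the high-$s$ side back to the state $(r, s)$ yields a feasible solution no worse than the original, forcing $s^*(r) \le s^*(r+1)$. The main obstacle throughout is the bookkeeping in parts (b) and (c): a local perturbation at round $r$ changes $P^*$ at every downstream state, and the MLR/stochastic-dominance argument must be applied uniformly across all future trajectories. Boundary cases $P^*(r, s) = 0$ can be handled by defining $a^*(r, s)$ arbitrarily at such states without affecting either feasibility or the objective.
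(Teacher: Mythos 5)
Your overall strategy---an exchange argument that shifts retention probability from a low-$s$ state to a high-$s$ state within a round, then iterates---is the same skeleton as the paper's proof, but the way you propagate the perturbation downstream breaks two of your three verification steps. You hold the downstream actions $\{a^*(r',\cdot)\}_{r'>r}$ fixed and choose $\alpha P^*(r,s_1)=\beta P^*(r,s_2)$ so that the mass entering round $r+1$ is conserved. But \eqref{eq:survive_constraint} is an \emph{equality} on the terminal mass $\sum_s P(R,s)$, and under a state-dependent downstream policy the conditional probability of surviving to round $R$ from the children of $(r,s_1)$ is generally different from that from the children of $(r,s_2)$ (indeed, if the downstream policy is already threshold-like, higher states survive with strictly larger probability). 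So conserving the round-$(r+1)$ mass does not conserve $\sum_s P(R,s)$, and your step (a) fails; for the same reason the totals $\sum_s P(r',s)$ for $r'>r+1$ are not preserved, so the claim in step (c) that the objective is unchanged fails as well. If you instead rebalance $\alpha,\beta$ to equalize terminal masses, the objective is no longer automatically preserved and needs a separate monotonicity argument. Step (b) also rests on an unproved assertion: that the endpoint distribution conditional on survival is stochastically non-decreasing in the starting state under an arbitrary (not yet threshold) downstream policy; this is nontrivial precisely because the conditioning event has different probability from different starting states.

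The paper repairs all three issues at once by \emph{not} freezing the downstream actions. It defines normalized subproblems LP($r_0,s_0,L_0$) on each subtree with the terminal-mass constraint built in, proves a combination lemma (Lemma \ref{lem:combine}), and then shows by backward induction (Lemma \ref{lem:sub_compare}) that the subproblem rooted at $(r_0,s_0+1)$ admits a feasible solution with objective no larger and weighted terminal value no smaller than the optimum of the subproblem rooted at $(r_0,s_0)$. The swap then replaces the affected subtrees by optimized subproblem solutions sharing a common survival parameter $L(r_0+1)$, so the perturbations cancel exactly at round $R$ by construction, and the objective and constraint comparisons follow from the value-function monotonicity rather than from a coupling. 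You would need an analogue of that lemma to make your exchange go through. (Your treatment of the monotonicity of $s^*(r)$ in $r$ is only a sketch, though the paper is admittedly terse on that point as well.)
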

The proof of Theorem \ref{thm:LP_thresh} can be found in Appendix \ref{sec:proof_LP_thresh}. According to Theorem \ref{thm:LP_thresh}, for each round $r$, there exists a threshold $s^*(r)$ such that the arm will be eliminated if the cumulative reward $s$ is under $s^*(r)$, and it will be pulled with probability 1 if the cumulative reward is above $s^*(r)$. With suitable choice of parameters (e.g. choosing a small $\delta_0$), $s^*(r)$ can be quite large, which results in a austere threshold policy that quickly eliminates arms. Figure \ref{fig:showcase} compares arm eliminating rates of the threshold policy induced by LP-ind and BatchRacing \citep{jun2016top} for an example with $K=100$. We can see that the threshold policy quickly eliminates most arms in early rounds, whereas BatchRacing keeps all arms in the first 120 rounds, and slowly eliminate arms in the subsequent rounds.

\begin{figure}[tbp]
	\label{fig:showcase}\centering{\includegraphics[width=0.45 
		\textwidth]{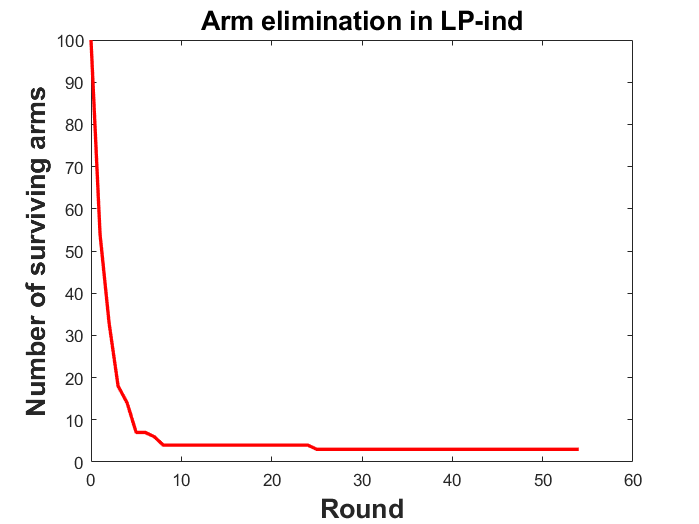}} {\includegraphics[width=0.45
		\textwidth]{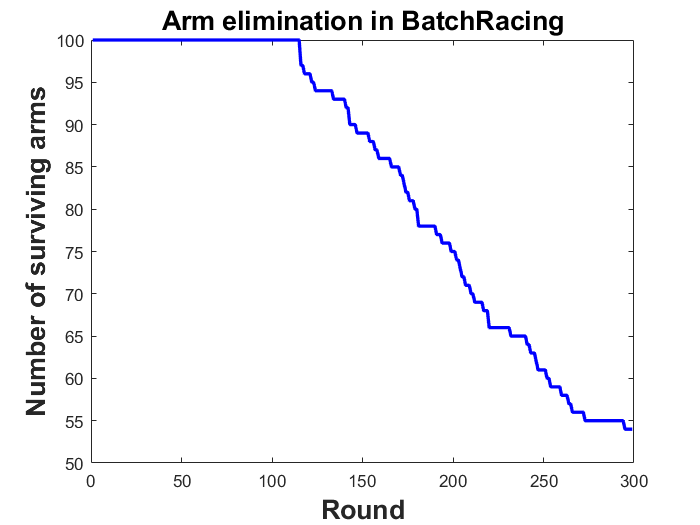}}
	\caption{Arm elimination in LP-ind and BatchRacing}
\end{figure}

\subsection{Total Sampling Cost} \label{subsec:cost}
Next, we concentrate on the total sampling cost. First of all, for LP-PAC/LP-SRM/LP-FC/, Theorem \ref{thm:LP_cost} provides an upper bound for its optimal value.

\begin{theorem} \label{thm:LP_cost}
	Suppose LP-PAC/LP-SRM/LP-FC is feasible. Then its optimal value $f^*$ satisfies
	$$f^* \le \frac{L}{K\Expect_{\pi}(\mu^R)}\sum_{r=1}^R 
	\Expect_{\pi}(\mu^r).$$ 
\end{theorem}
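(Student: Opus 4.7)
The plan is to establish the bound by exhibiting an explicit feasible primal solution to the LP whose objective evaluates exactly to the right-hand side. Writing $\pi_r = \Expect_\pi(\mu^r)$ for brevity, my candidate is the \emph{outcome-independent} pulling schedule defined by
\[
a(0,0) \;=\; \frac{L\pi_1}{K\pi_R}, \qquad a(r,s) \;=\; \frac{\pi_{r+1}}{\pi_r} \quad \text{for every } r \ge 1 \text{ and every } s \in \{0,\dots,r\}.
\]
Because each action depends only on the round index, the aggregation identity $\sum_{s'} P(r{+}1,s') = \sum_s a(r,s) P(r,s)$ obtained from \eqref{eq:P1_recursion_middle}--\eqref{eq:P0_recursion_middle} telescopes and yields $\sum_{s=0}^r P(r,s) = L\pi_r/(K\pi_R)$ for every $r \ge 1$. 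Setting $r=R$ makes the survival constraint \eqref{eq:survive_constraint} hold on the nose, and summing the round-wise contributions gives $f = \sum_{r=1}^R L\pi_r/(K\pi_R) = \frac{L}{K\pi_R}\sum_{r=1}^R \pi_r$, which is exactly the bound in the statement.

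What remains is feasibility verification. Admissibility $a(r,s) \in [0,1]$ for $r \ge 1$ is immediate from $\mu \in [0,1]$ a.s.\ (which forces $\pi_{r+1} \le \pi_r$), while $a(0,0) \le 1$ reduces to the side condition $L\pi_1 \le K\pi_R$. The boundary conditions in \eqref{eq:boundary_constraints} and the recursion identity \eqref{eq:P_recursion} hold by construction. The substantive step is Additional Constraint~2 in \eqref{eq:conditional_constraint}: because our actions are outcome-independent, the alive indicator at round $R$ and the cumulative reward $S \sim \mathrm{Bin}(R,\mu)$ factor, so $P(R,s) = (L/K)\Prob_\pi(S = s)$. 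Applying the tower rule then collapses $\sum_s w(s)P(R,s)$ into a prior statistic in each variant: $(L/K)\Prob_\pi(\mu \ge \mu_0)$ for LP-PAC, $(L/K)(\Expect\mu^* - \Expect\mu)$ for LP-SRM, and $(L/K)\Prob_\pi(\mu = \mu^*)$ for LP-FC, each of which needs to exceed $(1-\delta_0)(L/K)$.

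The hardest part is precisely this last verification. The theorem hypothesizes only that the LP is feasible, so the argument must translate this global hypothesis into the specific prior-level inequalities---e.g.\ $\Prob_\pi(\mu \ge \mu_0) \ge 1-\delta_0$ in the PAC case---that validate the outcome-independent candidate. When those inequalities are too weak for this candidate in isolation, I would fall back on a convex combination of it with a tilted ``all-successes'' schedule (in which $a(r,r)=1$ and $a(r,s)=0$ for $s<r$), which concentrates $P(R,\cdot)$ on the high-$s$ tail where $w$ is near one; I would then choose the mixing weight so that \eqref{eq:conditional_constraint} is met while the overall cost remains bounded by $\frac{L}{K\pi_R}\sum_{r=1}^R \pi_r$. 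Balancing these two components cleanly, rather than the essentially routine admissibility and boundary checks, is where the real technical effort lies.
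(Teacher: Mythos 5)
Your overall strategy---exhibit an explicit feasible solution whose objective equals the right-hand side---is the same as the paper's, and your computation that any round-indexed schedule with survival mass $\sum_s P(r,s)=L\pi_r/(K\pi_R)$ has objective $\frac{L}{K\pi_R}\sum_{r=1}^R\pi_r$ is correct. But the proof is not complete, and the incomplete part is misdiagnosed. Your primary (outcome-independent) candidate gives $P(R,s)=(L/K)\,\Expect_\pi\Prob(S=s\mid\mu)$, so Additional Constraint~2 reduces to prior-level inequalities such as $\Prob_\pi(\mu\ge\mu_0)\ge 1-\delta_0$. Feasibility of the LP does \emph{not} imply these: the whole point of the constraint is to tilt the terminal distribution toward large $s$ relative to the unconditioned binomial mixture, so in the regimes of interest your candidate violates \eqref{eq:conditional_constraint}. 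You acknowledge this and sketch a convex-combination repair, but you leave the mixing-weight analysis---which you yourself call the hardest part---undone. As written, the argument is a plan, not a proof.

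The missing observation, which is exactly what the paper uses, is that the fallback component alone suffices. Since $w$ is non-decreasing, any feasible point satisfies $w(R)\sum_s P(R,s)\ge\sum_s w(s)P(R,s)\ge(1-\delta_0)\sum_s P(R,s)$, so feasibility of the LP forces $w(R)\ge 1-\delta_0$. The pure all-successes schedule with $\wh a(0,0)=L/(K\pi_R)$ and $\wh a(r,r)=1$, $\wh a(r,s)=0$ for $s<r$, yields $P(r,r)=\wh a(0,0)\pi_r$ (using $q(r,r)=\pi_{r+1}/\pi_r$), hence terminal mass $L/K$ concentrated entirely at $s=R$, so \eqref{eq:conditional_constraint} holds automatically, and its objective is exactly $\frac{L}{K\pi_R}\sum_{r=1}^R\pi_r$---the same value as your candidate, since both have identical round-wise survival masses. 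No mixture is needed; your primary candidate and the balancing argument can be discarded entirely. (A small point common to both your write-up and the paper: one should also note that $\wh a(0,0)\le 1$, i.e.\ $L\le K\Expect_\pi(\mu^R)$, is needed for the schedule to be admissible.)
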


For beta priors, we have the following corollary.
\begin{coro}\label{coro:LP_cost}
	Assume $\pi=\mathrm{Beta}(a,b)$ for absolute constants $a,b$. Suppose LP-PAC/LP-SRM/LP-FC is feasible. Then 
	\begin{align*}
		f^*\lesssim \begin{cases}
			LR/K, & \mathrm{\ if\ } 0<b<1, \\
			LR\log R/K, & \mathrm{\ if\ } b=1, \\
			LR^b/K, & \mathrm{\ if\ } b>1.
		\end{cases}
	\end{align*}
\end{coro}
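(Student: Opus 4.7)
The plan is to apply Theorem~\ref{thm:LP_cost} directly and reduce everything to classical Beta/Gamma asymptotics. For $\mu \sim \mathrm{Beta}(a,b)$, the moments have the explicit form
\begin{align*}
\Expect_{\pi}(\mu^r) \;=\; \frac{B(a+r,b)}{B(a,b)} \;=\; \frac{\Gamma(a+b)}{\Gamma(a)}\cdot\frac{\Gamma(a+r)}{\Gamma(a+r+b)}.
\end{align*}
Since $a,b$ are absolute constants, Stirling's formula (or the standard ratio $\Gamma(x+\alpha)/\Gamma(x+\beta) = x^{\alpha-\beta}(1+O(1/x))$) yields $\Gamma(a+r)/\Gamma(a+r+b) \asymp r^{-b}$, hence $\Expect_{\pi}(\mu^r) \asymp r^{-b}$ with constants depending only on $a,b$. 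This immediately gives $1/\Expect_{\pi}(\mu^R) \asymp R^b$.

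Next I would bound the sum $\sum_{r=1}^R \Expect_{\pi}(\mu^r) \asymp \sum_{r=1}^R r^{-b}$ by splitting into the three cases indicated in the corollary. For $0 < b < 1$ the integral test gives $\sum_{r=1}^R r^{-b} \asymp R^{1-b}$; for $b=1$ it gives $\sum_{r=1}^R r^{-1} \asymp \log R$; and for $b>1$ the sum is bounded by a constant (namely $\zeta(b)$).

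Plugging these estimates into the bound from Theorem~\ref{thm:LP_cost},
\begin{align*}
f^* \;\le\; \frac{L}{K\,\Expect_{\pi}(\mu^R)}\sum_{r=1}^R \Expect_{\pi}(\mu^r) \;\asymp\; \frac{L}{K}\cdot R^b \cdot \sum_{r=1}^R r^{-b},
\end{align*}
and taking the three cases gives $LR/K$, $LR\log R/K$, and $LR^b/K$ respectively, exactly as claimed.

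The calculation is essentially mechanical once the Beta-moment formula is written down; the only point requiring care is to ensure that the two-sided $\asymp$ estimate on $\Expect_{\pi}(\mu^r)$ is uniform in $r$ (so that the constants absorbed into $\lesssim$ depend only on $a$ and $b$, not on $r$ or $R$). This uniformity follows from the standard Gamma-ratio asymptotic expansion, so I do not anticipate any real obstacle; the main bookkeeping task is just tracking the three regimes for $b$.
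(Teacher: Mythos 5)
Your proposal is correct and follows essentially the same route as the paper: the explicit Beta-moment formula, the Gamma-ratio asymptotic $\Expect_{\pi}(\mu^r)\asymp r^{-b}$, and the three-case evaluation of $\sum_{r=1}^R r^{-b}$ plugged into the bound of Theorem~\ref{thm:LP_cost}. Your added remark on the uniformity of the $\asymp$ estimate in $r$ is a point the paper glosses over, but there is no substantive difference in approach.
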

We provide the proofs of Theorem \ref{thm:LP_cost} and Corollary \ref{coro:LP_cost} in Appendix \ref{sec:proof_LP_cost}.

Since the expected sampling cost of stage 2 is $LR$, the total sampling cost $T$ of LP2S satisfies $\Expect(T) = Kf^* + LR$. The following corollary is then immediate.
\begin{coro}\label{coro:LP2S_cost} 
	Suppose LP-PAC/LP-SRM/LP-FC is feasible. The expected total sampling cost $T$ of the two-stage algorithm satisfies
	\begin{align*}
		\Expect(T) \lesssim \frac{L}{\Expect_{\pi}(\mu^R)}\sum_{r=1}^R 
		\Expect_{\pi}(\mu^r) + LR.
	\end{align*}
	If $\pi=\mathrm{Beta}(a,b)$, then
	\begin{align*}
		\Expect(T) \lesssim \begin{cases}
			LR, & \mathrm{\ if\ } 0<b<1, \\
			LR\log R, & \mathrm{\ if\ } b=1, \\
			LR^b, & \mathrm{\ if\ } b>1.
		\end{cases}
	\end{align*}
\end{coro}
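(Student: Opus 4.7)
The plan is to decompose the expected total sampling cost into contributions from the two stages and then invoke the previously stated bounds. First I would verify the identity $\Expect(T) = K f^* + LR$ remarked immediately before the corollary. By construction, the LP-ind objective $f = \sum_{r=1}^{R}\sum_{s=0}^{r} P(r,s)$ equals the expected number of pulls of a single arm under the induced policy, since $\sum_{s=0}^{r} P(r,s)$ is the probability that the focal arm survives to round $r$ and is pulled there. As Algorithm \ref{alg:stage1} applies the LP-induced actions to each of the $K$ arms independently (the policy is peer-independent), the expected stage-1 cost is exactly $K f^*$ by linearity of expectation.

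Next I would compute the expected stage-2 cost. By additional constraint~1, equation~\eqref{eq:survive_constraint}, the probability any given arm reaches $\bbJ_R$ is $L/K$, and independence across arms in stage~1 yields $|\bbJ_R| \sim \mathrm{Binomial}(K, L/K)$, so $\Expect|\bbJ_R| = L$. Stage~2 performs $R$ rounds of uniform exploration over $\bbJ_R$ (and nothing if $\bbJ_R$ is empty), giving an expected stage-2 cost of $R \cdot \Expect|\bbJ_R| = LR$. Summing the two stages gives $\Expect(T) = K f^* + LR$.

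Now I would apply Theorem~\ref{thm:LP_cost} to obtain
\begin{equation*}
K f^* \le \frac{L}{\Expect_{\pi}(\mu^R)} \sum_{r=1}^{R} \Expect_{\pi}(\mu^r),
\end{equation*}
and add $LR$ to both sides to conclude the first bound of the corollary. For the Beta case, I would apply Corollary~\ref{coro:LP_cost} to $K f^*$ and combine with the stage-2 term $LR$. For $0 < b < 1$ and $b = 1$, the stage-1 bound is already at least $LR$, so adding $LR$ preserves the stated rate; for $b > 1$, we have $R^{b} \gtrsim R$, so again $LR^{b} + LR \lesssim LR^{b}$. This verifies each of the three regimes.

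The proof is essentially a bookkeeping exercise once Theorem~\ref{thm:LP_cost} and Corollary~\ref{coro:LP_cost} are in hand; the only points that require care are (i) justifying why the expected stage-1 cost is precisely $K f^*$ (which relies on peer-independence and on $f$ being the per-arm expected pull count), and (ii) checking that the additive $LR$ term never dominates in the Beta regimes, so there is no genuine obstacle.
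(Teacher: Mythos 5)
Your proposal is correct and follows exactly the route the paper intends: the paper itself states $\Expect(T)=Kf^*+LR$ in the sentence preceding the corollary and calls the result immediate from Theorem~\ref{thm:LP_cost} and Corollary~\ref{coro:LP_cost}, which is precisely your bookkeeping. Your added verification that the stage-1 cost is $Kf^*$ (via peer-independence and $f$ being the per-arm expected pull count) and that the $LR$ term never changes the Beta-prior rates is accurate and fills in the only details the paper leaves implicit.
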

Corollary \ref{coro:LP2S_cost} indicates that the overall expected sampling cost has an upper bound that does not rely on $K$. Under beta prior, the upper bound is only linear in $L$ and at most polynomial in $R$. When $R\ll K$ and $L\ll K$, the sampling cost should be much smaller than most existing methods.

\subsection{Confidence and Regret} \label{subsec:regret}
Next we present upper bound results regarding confidence and regret, two key measures besides total sampling cost. 

First, for LP2S based on LP-PAC, we have the following theorem.
\begin{theorem} \label{thm:LP_PAC}
	Suppose LP-PAC is feasible. Then we miss an $(1-\mu_0 + C_1 \sqrt{\log L/R})$-optimal arm 
	with probability at most
	$$C_2 e^{-(1-\delta_0)L}$$
	for absolute constants $C_1, C_2>0$.
	Moreover,
	$$\BSR \le 1-\mu_0 + C_1 \sqrt{\frac{\log L}{R}} + C_2 e^{-(1-\delta_0)L} .$$
\end{theorem}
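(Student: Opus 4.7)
The plan is to decompose the event of recommending a sub-$(1-\mu_0+C_1\sqrt{\log L/R})$-optimal arm into failure at stage 1 (no good arm survives the LP threshold policy) and failure at stage 2 (uniform exploration picks the wrong survivor).

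First I would analyze stage 1. Because the policy $a^*(r,s)$ is peer-independent, the $K$ arms evolve independently, and by constraint \eqref{eq:survive_constraint} each arm survives to the end of round $R$ with probability exactly $L/K$. A Bayes-rule calculation, using that the elimination probabilities along any path depend on the trajectory only through $(r,s)$ and not on $\mu$, shows that the posterior of $\mu$ given ``arm in state $s$ at round $R$'' equals the ordinary Bernoulli posterior after $s$ successes in $R$ trials, so $\Prob(\mu\geq\mu_0\mid \text{in state } (R,s)) = w(s)$. Constraint \eqref{eq:conditional_constraint} then reads
\begin{align*}
\Prob(\text{arm survives and } \mu\geq\mu_0) \;=\; \sum_{s=0}^R w(s)P^*(R,s) \;\geq\; (1-\delta_0)\frac{L}{K},
\end{align*}
so by independence across the $K$ arms,
\begin{align*}
\Prob\bigl(\text{no } j\in\bbJ_R \text{ satisfies }\mu_j\geq\mu_0\bigr) \;\leq\; \Bigl(1-\tfrac{(1-\delta_0)L}{K}\Bigr)^{K} \;\leq\; e^{-(1-\delta_0)L}.
\end{align*}
Call the complementary event $E_1$; on $E_1$ pick any $j^\dagger\in\bbJ_R$ with $\mu_{j^\dagger}\geq\mu_0$.

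Next I would analyze stage 2. Each surviving arm is pulled $R$ times and $\wh{j}=\argmax_{j\in\bbJ_R}\wh{\mu}_j$. Hoeffding's inequality with a union bound gives, conditionally on $J=|\bbJ_R|$,
\begin{align*}
\Prob\!\left(\max_{j\in\bbJ_R}|\wh{\mu}_j-\mu_j|>\epsilon \,\Big|\, J\right) \;\leq\; 2Je^{-2R\epsilon^2}.
\end{align*}
On the uniform-convergence event intersected with $E_1$, the usual ``empirical best'' comparison gives $\mu_{\wh{j}}\geq \wh{\mu}_{\wh{j}}-\epsilon\geq \wh{\mu}_{j^\dagger}-\epsilon\geq \mu_{j^\dagger}-2\epsilon\geq \mu_0-2\epsilon$, so $\mu^*-\mu_{\wh{j}}\leq 1-\mu_0+2\epsilon$. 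Taking expectation over $J\sim\Bin(K,L/K)$ (so $\Expect J=L$) bounds the stage 2 failure probability by $2Le^{-2R\epsilon^2}$; choosing $\epsilon$ of order $\sqrt{\log L/R}$ with a sufficiently large leading constant makes this term subdominant to $e^{-(1-\delta_0)L}$, and a union bound over the two failure sources yields the first claim with absolute $C_1,C_2$.

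For the Bayesian simple regret I would split $\BSR=\Expect[\mu^*-\mu_{\wh{j}}]$ across the success event (on which $\mu^*-\mu_{\wh{j}}\leq 1-\mu_0+C_1\sqrt{\log L/R}$, using $\mu^*\leq 1$) and its complement (on which $\mu^*-\mu_{\wh{j}}\leq 1$ trivially), weighting the latter by the confidence bound $C_2 e^{-(1-\delta_0)L}$.

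The main obstacle is the Bayes-rule step translating the LP constraint \eqref{eq:conditional_constraint} into the usable statement ``each stage 1 survivor has posterior probability at least $1-\delta_0$ of satisfying $\mu\geq\mu_0$''. It relies on the fact that, because $a^*(r,s)$ depends on the trajectory only through $(r,s)$ and not on $\mu$, the path-selection factors cancel in the posterior. A secondary technicality is calibrating $\epsilon$ so that the polynomial Hoeffding tail from stage 2 is absorbed by the exponential stage 1 tail; this requires taking $C_1$ large enough relative to $\delta_0$ and $L$ so that both contributions can be collapsed into a single $C_2 e^{-(1-\delta_0)L}$.
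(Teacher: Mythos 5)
Your stage 1 analysis and your BSR decomposition essentially coincide with the paper's. The paper conditions on $E_J=\{J \text{ arms survive}\}$, bounds $\Prob(\nu^*<\mu_0\mid E_J)\le\delta_0^J$ using the same Bayes interpretation of constraint \eqref{eq:conditional_constraint} that you spell out, and computes $\Prob(E_0)+\Expect(\delta_0^J)\lesssim e^{-(1-\delta_0)L}$; your single expression $\bigl(1-(1-\delta_0)L/K\bigr)^K$ is exactly $\Expect(\delta_0^J)$ for $J\sim\Bin(K,L/K)$, so the two computations are the same. Your explicit justification that the action products along a path cancel in the posterior (so that a survivor's posterior matches the plain Binomial posterior and $w(s)$ is the correct conditional probability) is a welcome elaboration of a step the paper asserts without proof.

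Where you genuinely diverge is stage 2, and that is also where the gap is. The paper invokes Theorem 33.1 of Lattimore--Szepesv\'ari to bound the conditional \emph{expected} regret of uniform exploration by $O(\sqrt{\log J/R})$ and then applies Jensen; you instead run Hoeffding plus a union bound to get a high-probability guarantee. Your route is in principle better matched to the first (PAC-style) claim, which is a probability statement that an expectation bound does not literally deliver. However, your calibration step fails: with $\epsilon=C\sqrt{\log L/R}$ the stage-2 failure probability $2Le^{-2R\epsilon^2}=2L^{1-2C^2}$ decays only polynomially in $L$, so it cannot be absorbed into $C_2e^{-(1-\delta_0)L}$, which decays exponentially in $L$, no matter how large the absolute constant $C$ is. To obtain an exponential-in-$L$ tail you would need $\epsilon\gtrsim\sqrt{L/R}$, which destroys the claimed $\sqrt{\log L/R}$ accuracy. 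As written, your argument yields failure probability $e^{-(1-\delta_0)L}+O(L^{1-2C^2})$, not $C_2e^{-(1-\delta_0)L}$. (To be fair, the paper's own proof of this claim is no more complete: it derives only the conditional expected regret and asserts ``thus the PAC conclusion holds.'' But you should not present the absorption as if it goes through.) Your BSR bound is unaffected, since there the stage-2 contribution enters additively as $C_1\sqrt{\log L/R}$ rather than needing to be folded into the exponential term.
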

We defer the proof of Theorem \ref{thm:LP_PAC}, as well as subsequent Theorems \ref{thm:LP_SRM} and \ref{thm:LP_FC} to Appendix \ref{sec:proof_LP_regret}. The three terms $1-\mu_0$, $C_1 \sqrt{\log L/R}$ and $C_2 e^{-(1-\delta_0)L}$ in the upper bound correspond to regret bound of any $(1-\mu_0)$-optimal arm, regret in stage 2, and the probability of no $(1-\mu_0)$-optimal arm arms chosen in stage 1, respectively. Note that the second term increases in $L$ and the last term decreases in $L$, so there is a trade-off in the choice of $L$.

\medskip

For LP2S based on LP-SRM, we provide the following results on the upper bound 
of $\BSR$.
\begin{theorem} \label{thm:LP_SRM}
	Suppose LP-SRM is feasible. We 
	have
	$$\BSR \le e^{-L} + 1-\delta_0.$$
\end{theorem}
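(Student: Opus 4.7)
The plan is to decompose the Bayesian simple regret $\BSR=\Expect[\mu^*]-\Expect[\mu_{\hat j}]$ according to whether any arm survives Stage 1, and to control each piece using the binomial structure of the survivor count together with the LP-SRM constraint. Because Stage 1 applies the LP-induced actions \emph{independently} across arms and constraint~\eqref{eq:survive_constraint} fixes the per-arm survival probability at $L/K$, the indicators $\mathbf{1}\{j\in\bbJ_R\}$ are \iid $\Bern(L/K)$, so $J:=|\bbJ_R|\sim\Bin(K,L/K)$ and $\Prob(J=0)=(1-L/K)^K\le e^{-L}$. Using the crude bound $\mu^*-\mu_{\hat j}\le 1$, the ``no survivor'' event contributes at most $e^{-L}$ to $\BSR$, which supplies the first term in the claimed bound.

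For the complementary event $\{J\ge 1\}$, I would first unpack Additional constraint~\eqref{eq:conditional_constraint} with $w(s)=\Expect\mu^*-\Expect(\mu\mid R,s)$. Combining $\sum_s P(R,s)=L/K$ with the identity $\sum_s \Expect(\mu\mid R,s)\,P(R,s)=\Expect[\mu_j\,\mathbf{1}\{j\in\bbJ_R\}]$, the constraint rearranges to
\[
\Expect[\mu_j\mid j\in\bbJ_R]\ \ge\ \Expect\mu^* - (1-\delta_0).
\]
Since the Bernoulli posterior mean is monotone in the observed cumulative reward, the Stage 2 recommendation $\hat j$ (the survivor with the highest cumulative reward) has, conditional on all data, posterior mean at least the average posterior mean across $\bbJ_R$. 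Taking expectations and invoking exchangeability together with the peer-independent structure of Stage 1 (which makes $(\mu_j,\mathbf{1}\{j\in\bbJ_R\})$ \iid across $j$), the expected value of the average posterior mean over survivors, restricted to $\{J\ge 1\}$, equals $\Expect[\mu_j\mid j\in\bbJ_R]\cdot\Prob(J\ge 1)$. Hence
\[
\Expect[\mu_{\hat j}]\ \ge\ \Expect\bigl[\mu_{\hat j}\,\mathbf{1}\{J\ge 1\}\bigr]\ \ge\ \bigl(\Expect\mu^*-1+\delta_0\bigr)\bigl(1-e^{-L}\bigr),
\]
and subtracting from $\Expect\mu^*$ together with $\Expect\mu^*-1+\delta_0\le 1$ delivers $\BSR\le(1-\delta_0)+e^{-L}$.

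The main technical hurdle is the chain of inequalities in the middle step. The ``max dominates average'' part is immediate once one conditions on all observed data, since the Bernoulli posterior mean is monotone in cumulative reward and maximum is at least the arithmetic mean. The more delicate step is identifying the expectation of the average posterior mean over survivors with $\Expect[\mu_j\mid j\in\bbJ_R]\cdot\Prob(J\ge 1)$; this is precisely where the peer-independent design of Stage 1 pays off, because $(\mu_j,\mathbf{1}\{j\in\bbJ_R\})$ are \iid across $j$ and so the number of \emph{other} survivors is independent of $\mu_j$ given $\{j\in\bbJ_R\}$, which collapses the averaging to a single-arm conditional expectation.
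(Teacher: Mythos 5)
Your proof is correct and follows essentially the same route as the paper's: decompose $\BSR$ on whether any arm survives Stage 1 (the no-survivor event contributing $(1-L/K)^K \le e^{-L}$ with regret crudely bounded by $1$), and use the LP-SRM instance of constraint \eqref{eq:conditional_constraint} to bound the conditional expected regret of a survivor by $1-\delta_0$. The only differences are that you explicitly justify the step the paper asserts in one line --- that the recommended arm is in expectation no worse than an average survivor, via the max-dominates-average argument and the $\Expect[1/(1+\Bin(K-1,L/K))]=\{1-(1-L/K)^K\}/L$ identity, which is a genuine improvement in rigor --- and that, exactly like the paper, you implicitly read the SRM version of \eqref{eq:conditional_constraint} with the inequality reversed (since the stated $w(s)=\Expect\mu^*-\Expect(\mu\mid R,s)$ is \emph{decreasing} in $s$, the constraint as literally written would bound the survivors' regret from below rather than above, so your rearrangement $\Expect[\mu_j\mid j\in\bbJ_R]\ge\Expect\mu^*-(1-\delta_0)$ matches the paper's intent but not its literal text).
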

The two terms $e^{-L}$ and $1-\delta_0$ correspond respectively to the probability of no arms surviving stage 1, and the regret bound in stage 2.

\medskip

For LP2S based on LP-FC, the following assumption is needed to derive upper bounds of $1-\BPB$ and $\BSR$.
\begin{assumption} \label{assump:FC}
	The c.d.f. $F_\pi(\cdot)$ of the prior $\pi$ satisfies
	\begin{enumerate}
		\item There exists an absolute constant $\alpha>0$ such that $F_\pi(1-d)\ge 1-d^{\alpha}$ for all $d$ close enough to 0. 
		\item $|F_\pi(u_1)-F_\pi(u_2)|\le \beta |u_1-u_2|$ for an absolute constant $\beta>0$ and any $u_1, u_2 \in [0,1]$.
	\end{enumerate}
\end{assumption} 
It can be shown that Assumption \ref{assump:FC} is satisfied for $\pi=\mathrm{Beta}(a,b)$ with $a\ge 1, b\ge 1$.

The following theorem provides upper bounds of $1-\BPB$ and $\BSR$ for the two-stage algorithm based on LP-FC.
\begin{theorem} \label{thm:LP_FC}
	Suppose LP-FC is feasible, and Assumption \ref{assump:FC} holds. Let $\alpha_0=\min(\alpha, 1)$, then we have
	$$1-\BPB \lesssim 1-(1-\delta_0)L + e^{-L} + C_1 K^{-(\alpha_0 c-2)} + C_2 L \exp\left(-\frac{RK^{-2c}}{4}\right)$$
	for and absolute constants $c>2/\alpha_0$, and $C_1, C_2>0$.
	Moreover,
	\begin{align*}
		\BSR \le \min & \left\{1-(1-\delta_0)L + e^{-L} + C_1 K^{-(\alpha_0 c-2)} + C_2 L \exp\left(-\frac{RK^{-2c}}{4}\right), \right. \\
		& \left. 1-(1-\delta_0)L + e^{-L} + C_3\sqrt{\frac{\log L}{R}} \right\}
	\end{align*}
	for absolute constant $C_3>0$.
\end{theorem}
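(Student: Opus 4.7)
The plan is to decompose the failure event $\{\wh j\ne j^*\}$ of LP2S into contributions from the two stages and bound each piece separately, then combine. Write
\begin{align*}
	1-\BPB \le \Prob(J=0) + \Prob(J\ge 1,\ j^*\notin\bbJ_R) + \Prob(j^*\in\bbJ_R,\ \wh j\ne j^*).
\end{align*}
The first term is controlled through constraint \eqref{eq:survive_constraint}: the threshold policy of Algorithm \ref{alg:stage1} is applied independently across arms with marginal survival probability $L/K$, so $J\sim\mathrm{Binomial}(K,L/K)$ and $\Prob(J=0)=(1-L/K)^K\le e^{-L}$. For the second term, I would invoke constraint \eqref{eq:conditional_constraint} with $w(s)=\Prob(\mu=\mu^*\mid R,s)$. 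Since $s$ is a sufficient statistic for $\mu$ under Bernoulli pulls and survival under the threshold policy is a function of the focal arm's sample path only, one shows that $\sum_s w(s) P^*(R,s)$ equals $\Prob(\text{focal arm is best and survives})$. Exchangeability of the prior and the fact that exactly one arm attains the maximum a.s.\ then yield $\Prob(j^*\in\bbJ_R)\ge(1-\delta_0)L$, making the second term at most $1-(1-\delta_0)L$.

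For the third term I would condition on $\bmu$ and set $\Delta=K^{-c}$ and $\Delta_{\min}=\min_{j\ne j^*}(\mu^*-\mu_j)$. On $\{\Delta_{\min}\ge\Delta\}$, Hoeffding's inequality applied to each pair of stage-2 sample means $\bar X_{j^*}$ and $\bar X_j$, together with a union bound over the at most $J$ non-best survivors, bounds the conditional failure probability by $2J\exp(-R\Delta^2/2)$; taking expectation and using $\Expect J\le L$ gives the term $C_2L\exp(-RK^{-2c}/4)$ after absorbing constants. On the complementary event $\{\Delta_{\min}<\Delta\}$, I would bound the conditional failure probability by $1$, reducing the contribution to $\Prob(\Delta_{\min}<K^{-c})$.

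The estimate $\Prob(\Delta_{\min}<K^{-c})\le C_1 K^{-(\alpha_0 c-2)}$ is the main obstacle. I would reduce it via a union bound over the $\binom K2$ pairs to controlling $\Prob(|\mu_i-\mu_j|<\Delta)$ for a single pair. The Lipschitz assumption \ref{assump:FC}.2 gives the global bound $2\beta\Delta$, while Assumption \ref{assump:FC}.1 yields the sharper tail estimate $F_\pi(u+\Delta)-F_\pi(u-\Delta)\lesssim(1-u+\Delta)^\alpha$ when $u$ lies near $1$. Splitting the integral $\int f_\pi(u)[F_\pi(u+\Delta)-F_\pi(u-\Delta)]du$ at a suitable cutoff close to $1$ and combining the two estimates then produces a pairwise bound of order $\Delta^{\alpha_0}$ with $\alpha_0=\min(\alpha,1)$. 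Multiplying by the $K^2$ pair count gives the claimed $K^{2-\alpha_0 c}$ rate, and the hypothesis $c>2/\alpha_0$ ensures it is summable. Assembling the four pieces completes the bound on $1-\BPB$.

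For $\BSR$ two bounds are needed, and the stated minimum takes the smaller. The first follows from $\BSR\le 1\cdot\Prob(\wh j\ne j^*)=1-\BPB$ since rewards lie in $[0,1]$. For the second, I would replace the gap-based analysis of stage 2 by a direct concentration argument in the spirit of Theorem \ref{thm:LP_PAC}: conditional on $j^*\in\bbJ_R$ and $J\ge 1$, a Hoeffding plus union bound over $\bbJ_R$ yields $|\bar X_j-\mu_j|\le C_3\sqrt{(\log L)/R}/2$ uniformly over $j\in\bbJ_R$ with the residual probability absorbable into $e^{-L}$, hence $\mu^*-\mu_{\wh j}\le C_3\sqrt{(\log L)/R}$. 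Combining with $\Prob(J=0)\le e^{-L}$ and $\Prob(j^*\notin\bbJ_R)\le 1-(1-\delta_0)L$ and using $\mathrm{SR}\le 1$ on the exceptional events produces the alternative bound $e^{-L}+1-(1-\delta_0)L+C_3\sqrt{(\log L)/R}$, and taking the minimum of the two bounds completes the proof.
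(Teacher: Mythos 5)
Your decomposition of $1-\BPB$ into (no survivors) $+$ (best arm eliminated in stage 1) $+$ (stage-2 misidentification) is exactly the paper's decomposition via the events $E_0$, $E^1$, $E^2\cap E^3$, and your treatment of the first two terms and of the stage-2 Hoeffding/union-bound piece matches the paper's. The one genuinely different component is the estimate $\Prob(\Delta_2\le K^{-c})$: the paper writes down the joint density of the top two order statistics $(\mu_{(K-1)},\mu^{*})$ and integrates, using both parts of Assumption \ref{assump:FC} to get $\Prob(\Delta_2\le d)\lesssim K^{-(\alpha_0 c-1)}$, which it then multiplies by the crude bound $J\le K$; you instead bound the conditional failure probability by $1$ on $\{\Delta_2<\Delta\}$ and control $\Prob(\Delta_2<\Delta)$ by a union bound over the $\binom{K}{2}$ pairs together with the pairwise collision bound $\Prob(|\mu_i-\mu_j|<\Delta)\le 2\beta\Delta$ from the Lipschitz condition. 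Your route is more elementary, arrives at the same final term $C_1K^{-(\alpha_0 c-2)}$ (since $K^{2-c}\le K^{2-\alpha_0 c}$ because $\alpha_0\le 1$), and in fact reveals that Assumption \ref{assump:FC}.1 is not needed for this term at all --- your proposed splitting of the integral near $u=1$ is superfluous, since the Lipschitz bound alone already dominates the target rate; the paper's sharper order-statistics computation is spent compensating for its lossier $J\le K$ step, so the two approaches break even. One minor imprecision: in your second $\BSR$ bound, the residual probability from the union-bounded Hoeffding event at deviation $C_3\sqrt{(\log L)/R}$ is of order $L^{1-C_3^2/8}$, polynomial rather than exponential in $L$, so it is not literally ``absorbable into $e^{-L}$''; the clean fix (and what the paper does via Theorem 33.1 of \cite{lattimore2020bandit} plus Jensen) is to bound the expected maximum deviation directly by $O(\sqrt{(\log J)/R})$, which removes the residual term entirely.
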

The four terms $1-(1-\delta_0)L$, $e^{-L}$, $C_1 K^{-(\alpha_0 c-2)}$ and $C_2 L \exp\left(RK^{-2c}/4\right)$ in the upper bound of $1-\BPB$ correspond to the probability of not including the best arm in stage 1, the probability of no arms surviving stage 1, the probability that the top two arms are too close, and the probability of not selecting the optimal arm in stage 2, respectively. The BSR has two simultaneous upper bounds, of which the first is the same as the upper bound of $1-\BPB$, and the second is similar to the BSR bound in Theorem \ref{thm:LP_PAC}.


\section{Simulation Studies}  \label{sec:simulation}
In this section, we study the numerical performance of LP2S. We carry out three experiments, corresponding to applying
LP-SRM, LP-PAC and LP-FC in the first stage of LP2S respectively. The competitors include 1) two stage exploration 
(abbreviated TSE) from 
\cite{komiyama2021optimal}; 2) batched Thompson sampling from 
\cite{kalkanli2021abatched}; and 3) BatchRacing from \cite{jun2016top}.

In the first experiment, we apply LP-PAC in the first stage of LP2S.  We assume 
$\pi=\mathrm{Beta}(a,b)$, where $(a, b)=(1,1)$, $(5,1)$ or $(1,3)$. We set 
$K=1000$, $2000$ or $5000$. For LP2S, we take $R=c_1\log K$, $L=c_2\log K$ with 
$c_1=30$ and $c_2=3$, $\mu_0=0.7$ for $(a,b)=(1,1)$ or $(1,3)$, $\mu_0=0.8$ for $(a,b)=(5,1)$, and $\delta_0$ be the smallest number such that LP is feasible. For TSE, we fix $q=0.5$. For batched Thompson sampling, we take 
$\alpha=2$, and choose the arm with maximum average reward. In BatchRacing we allow maximum of $R$ batches, stick to the $(K,l)$-batch setting, and take confidence parameter $\delta=0.05$. BatchRacing works for top-$k$ 
arm identification, and we simply set $k=1$. The experiment runs $N=1000$ 
realizations of the $K$-arm bandit. We make the comparison in two different 
ways. First, we set the total sampling costs of the three competing methods 
approximately equal to that of LP2S, and compare the average simple regrets of 
the four methods over the $N$ simulation runs. Second, we set the simple 
regrets of the three competing methods close to that of LP2S, and compare the 
average total sampling costs of all methods. The average simple regret and average
sampling cost are reported in Table \ref{tab:LP-SRM}. For all cases, with the 
same sampling cost, LP2S has the smallest simple regret. Meanwhile, with similar 
simple regret, LP2S has the smallest sampling cost.

\begin{table}[]
	\centering
	\caption{Performances of LP-PAC induced LP2S and other competing methods}%
	\scalebox{1.0}{
		\renewcommand{\arraystretch}{1.2} {\footnotesize \
			\begin{tabular}{|c|c|c||c|c|c|c|}
				\hline
				\multirow{2}{*}{$(a,b)$} & \multirow{2}{*}{$K$} & 
				\multirow{2}{*}{$T$} & \multicolumn{4}{c|}{SR} \\ \cline{4-7}
				&  &  & LP2S & TSE & Batched Thompson & 
				BatchRacing \\ \hline
				\multirow{3}{*}{(1,1)} & 1000 & 6.94E+03 & {\bf 3.32E-03} & 
				1.11E-01 
				& 8.01E-02 & 9.34E-02 \\
				& 2000 & 1.19E+04 & {\bf 3.43E-03} & 1.25E-01 & 1.02E-01 & 
				1.34E-01 
				\\ 
				& 5000 & 1.48E+04 & {\bf 4.01E-03} & 1.90E-01 & 1.83E-01 & 
				2.15E-01 
				\\ \hline
				\multirow{3}{*}{(5,1)} & 1000 & 3.77E+03 & {\bf 5.58E-03} & 
				1.17E-01 & 9.72E-02 & 1.01E-01 \\
				& 2000 & 1.03E+04 & {\bf 3.16E-03} & 1.08E-01 & 8.65E-02 & 
				8.94E-02 \\
				& 5000 & 1.31E+04 & {\bf 4.14E-03} & 1.19E-01 & 1.13E-01 & 
				1.11E-01 
				\\ \hline
				\multirow{3}{*}{(1,3)} & 1000 & 1.11E+04 & {\bf 9.66E-03} & 
				4.92E-02
				& 1.38E-01 & 1.12E-01 \\
				& 2000 & 1.80E+04 & {\bf 5.76E-03} & 6.92E-02	& 2.04E-01 & 
				1.60E-01 \\
				& 5000 & 2.91E+04 & {\bf 7.85E-03} & 1.10E-01 & 3.75E-01 & 
				2.51E-01 
				\\ \hline \hline 
				\multirow{2}{*}{$(a,b)$} & \multirow{2}{*}{$K$} & 
				\multirow{2}{*}{SR} & \multicolumn{4}{c|}{$T$} \\ \cline{4-7}
				&  &  & LP2S & TSE & Batched Thompson & 
				BatchRacing \\ \hline
				\multirow{3}{*}{(1,1)} & 1000 & 3.32E-03 & {\bf 6.94E+03} & 
				1.80E+05 
				& 2.00E+04 & 1.97E+05 \\
				& 2000 & 3.43E-03 & {\bf 1.19E+04} & 4.00E+05 & 4.00E+04 & 
				4.30E+05 
				\\ 
				& 5000 & 4.01E-03 & {\bf 1.48E+04} & 8.00E+05 & 1.20E+05 & 
				8.43E+05 
				\\ \hline
				\multirow{3}{*}{(5,1)} & 1000 & 5.58E-03 & {\bf 3.77E+03} & 
				2.80E+05 
				& 2.20E+04 & 1.80E+05 \\
				& 2000 & 3.16E-03 & {\bf 1.03E+04} & 7.51E+05 & 5.00E+04 & 
				5.48E+05 \\
				& 5000 & 4.14E-03 & {\bf 1.31E+04} & 1.60E+06 & 1.20E+05 & 
				8.50E+05 
				\\ \hline
				\multirow{3}{*}{(1,3)} & 1000 & 9.66E-03 & {\bf 1.11E+04} & 
				6.00E+04 
				& 2.50E+05 & 1.40E+05 \\
				& 2000 & 5.76E-03 & {\bf 1.80E+04} & 1.20E+05 & 7.00E+05 & 
				2.70E+05 \\
				& 5000 & 7.85E-03 & {\bf 2.91E+04} & 3.00E+05 & 1.30E+06      & 
				9.89E+05  
				\\ \hline
	\end{tabular}}}
	\label{tab:LP-PAC}
\end{table}

In the second experiment, we use LP-SRM in the first stage of LP2S. All the parameters are the same as the first experiment, except that we do not need to specify $\mu_0$. The results on the average simple regret and sampling cost are shown in Table \ref{tab:LP-PAC}. We draw the same conclusion that the LP2S method has the lowest simple regret given the same sampling cost, and it has the lowest sampling cost given approximately the same simple regret.

\begin{table}[]
	\centering
	\caption{Performances of LP-SRM induced LP2S and other competing methods}%
	\scalebox{1.0}{
		\renewcommand{\arraystretch}{1.2} {\footnotesize \
			\begin{tabular}{|c|c|c||c|c|c|c|}
				\hline
				\multirow{2}{*}{$(a,b)$} & \multirow{2}{*}{$K$} & 
				\multirow{2}{*}{$T$} & \multicolumn{4}{c|}{SR} \\ \cline{4-7}
				&  &  & LP2S & TSE & Batched Thompson & 
				BatchRacing \\ \hline
				\multirow{3}{*}{(1,1)} & 1000 & 8.87E+03 & {\bf 3.74E-03} & 9.70E-02 & 5.09E-02 & 9.53E-02 \\
				& 2000 & 1.13E+04 &  {\bf 3.80E-03} & 1.31E-01 & 1.11E-01 & 1.40E-01 \\ 
				& 5000 & 1.41E+04 & {\bf 3.57E-03} & 1.89E-01 & 1.83E-01 & 2.15E-01 \\ \hline
				\multirow{3}{*}{(5,1)} & 1000 & 1.03E+04 & {\bf 4.54E-03} & 8.51E-02 &  3.75E-02 & 6.12E-02 \\
				& 2000 & 9.96E+03 & {\bf 3.96E-03} & 1.09E-01 & 8.68E-02 & 9.38E-02 \\
				& 5000 & 1.73E+04 & {\bf 2.16E-03} & 1.18E-01 & 1.03E-01 & 1.07E-01 \\ \hline
				\multirow{3}{*}{(1,3)} & 1000 & 5.19E+03 & {\bf 1.32E-01} & 1.49E-01 & 3.06E-01 & 2.47E-01 \\
				& 2000 & 6.60E+03 & {\bf 1.40E-01} & 2.46E-01 & 4.58E-01 & 3.46E-01  \\
				& 5000 & 1.26E+04 & {\bf 1.26E-01} & 2.65E-01 & 5.07E-01 & 4.06E-01 \\ \hline \hline 
				\multirow{2}{*}{$(a,b)$} & \multirow{2}{*}{$K$} & 
				\multirow{2}{*}{SR} & \multicolumn{4}{c|}{$T$} \\ \cline{4-7}
				&  &  & LP2S & TSE & Batched Thompson & 
				BatchRacing \\ \hline
				\multirow{3}{*}{(1,1)} & 1000 & 3.74E-03 & {\bf 8.87E+03} & 
				1.80E+05 & 2.00E+04 & 1.97E+05 \\ 
				& 2000 & 3.80E-03 & {\bf 1.13E+04} & 4.00E+05 & 4.00E+04 & 4.30E+05 \\ 
				& 5000 & 3.57E-03 & {\bf 1.41E+04} & 1.00E+06 & 1.20E+05 & 1.19E+06  \\ \hline
				\multirow{3}{*}{(5,1)} & 1000 & 4.54E-03 & {\bf 1.03E+04} & 3.00E+05 & 2.30E+04 & 2.00E+05 \\
				& 2000 & 3.96E-03 & {\bf 9.96E+03} & 7.51E+05 & 5.00E+04 & 5.48E+05  \\ 
				& 5000 & 2.16E-03 & {\bf 1.73E+04} & 2.00E+06 & 1.20E+05 & 1.27E+06 \\ \hline 
				\multirow{3}{*}{(1,3)} & 1000 & 1.32E-01 & {\bf 5.19E+03} & 5.59E+03 & 1.10E+04 & 9.11E+03 \\
				& 2000 & 1.40E-01 & {\bf 6.60E+03} & 1.10E+04 & 2.40E+04 & 2.03E+04 \\ 
				& 5000 & 1.26E-01 & {\bf 1.26E+04} & 3.74E+04 & 8.00E+04 & 6.01E+04 \\ \hline 	
	\end{tabular}}}
	\label{tab:LP-SRM}
\end{table}

In the last experiment, we test the performance of LP-FC induced LP2S. We 
maintain the same prior distribution as the first two experiments, but we only 
consider $K=200$. This is because LP-FC requires that $R\gtrsim K$, but when 
$R$ becomes large, the LP becomes too computationally expensive. For LP2S, we 
set $R=300$, $L=5$ and $\delta_0=0.93$. For the other three methods, we keep the 
same parameter settings. In addition to the two ways of comparison, we add 
another way of comparison, in which we set the PB of the competing methods the 
same as that of LP2S and compare their average sampling costs. Table 
\ref{tab:LP-FC} summarizes the average simple regret, sampling cost and 1-PB 
of all methods. Note the three blocks correspond to comparisons with fixed 
sampling cost, fixed simple regret and fixed PB, respectively. The performance of LP2S is only 
mediocre, inferior to TSE or batched Thompson 
sampling in most cases. This shows that LP-FC may not have advantages when $K$ is small.

\begin{table}[]
	\centering
	\caption{Performances of LP-FC induced LP2S and other competing methods. Batched Thompson is abbreviated BatchThomp to save space.}
	\scalebox{0.85}{
		\renewcommand{\arraystretch}{1.2} {\footnotesize \
	\begin{tabular}{|c|c|cccc||cccc|}
		\hline
		\multirow{2}{*}{$(a,b)$} &
		\multirow{2}{*}{$T$} &
		\multicolumn{4}{c||}{SR} &
		\multicolumn{4}{c|}{1-PB} \\ \cline{3-10} 
		&
		&
		\multicolumn{1}{c|}{LP2S} &
		\multicolumn{1}{c|}{TSE} &
		\multicolumn{1}{c|}{BatchThomp} &
		BatchRacing &
		\multicolumn{1}{c|}{LP2S} &
		\multicolumn{1}{c|}{TSE} &
		\multicolumn{1}{c|}{BatchThomp} &
		BatchRacing \\ \hline
		(1,1) &
		6.40E+03 &
		\multicolumn{1}{c|}{6.68E-03} &
		\multicolumn{1}{c|}{2.79E-02} &
		\multicolumn{1}{c|}{\textbf{2.48E-03}} &
		2.83E-02 &
		\multicolumn{1}{c|}{0.57} &
		\multicolumn{1}{c|}{0.93} &
		\multicolumn{1}{c|}{\textbf{0.22}} &
		0.83 \\
		(5,1) &
		\multicolumn{1}{l|}{8.68E+03} &
		\multicolumn{1}{l|}{9.34E-04} &
		\multicolumn{1}{c|}{3.52E-02} &
		\multicolumn{1}{c|}{\textbf{8.37E-04}} &
		1.91E-02 &
		\multicolumn{1}{c|}{\textbf{0.44}} &
		\multicolumn{1}{c|}{1.00} &
		\multicolumn{1}{c|}{0.46} &
		0.95 \\
		(1,3) &
		6.53E+03 &
		\multicolumn{1}{c|}{2.39E-02} &
		\multicolumn{1}{c|}{\textbf{1.80E-02}} &
		\multicolumn{1}{c|}{2.66E-02} &
		3.84E-02 &
		\multicolumn{1}{c|}{0.40} &
		\multicolumn{1}{c|}{0.40} &
		\multicolumn{1}{c|}{\textbf{0.17}} &
		0.50 \\ \hline
		\multirow{2}{*}{$(a,b)$} &
		\multirow{2}{*}{SR} &
		\multicolumn{4}{c||}{$T$} &
		\multicolumn{4}{c|}{1-PB} \\ \cline{3-10} 
		&
		&
		\multicolumn{1}{c|}{LP2S} &
		\multicolumn{1}{c|}{TSE} &
		\multicolumn{1}{c|}{BatchThomp} &
		BatchRacing &
		\multicolumn{1}{c|}{LP2S} &
		\multicolumn{1}{c|}{TSE} &
		\multicolumn{1}{c|}{BatchThomp} &
		BatchRacing \\ \hline
		(1,1) &
		6.68E-03 &
		\multicolumn{1}{c|}{6.40E+03} &
		\multicolumn{1}{c|}{1.71E+04} &
		\multicolumn{1}{c|}{\textbf{3.00E+03}} &
		1.92E+04 &
		\multicolumn{1}{c|}{0.57} &
		\multicolumn{1}{c|}{0.83} &
		\multicolumn{1}{c|}{\textbf{0.34}} &
		0.61 \\
		(5,1) &
		9.34E-04 &
		\multicolumn{1}{c|}{8.68E+03} &
		\multicolumn{1}{c|}{1.00E+05} &
		\multicolumn{1}{c|}{\textbf{7.00E+03}} &
		5.99E+04 &
		\multicolumn{1}{c|}{\textbf{0.44}} &
		\multicolumn{1}{c|}{0.88} &
		\multicolumn{1}{c|}{0.54} &
		0.71 \\
		(1,3) &
		2.39E-02 &
		\multicolumn{1}{c|}{6.53E+03} &
		\multicolumn{1}{c|}{5.10E+03} &
		\multicolumn{1}{c|}{\textbf{2.20E+03}} &
		9.00E+03 &
		\multicolumn{1}{c|}{0.40} &
		\multicolumn{1}{c|}{0.45} &
		\multicolumn{1}{c|}{\textbf{0.37}} &
		0.42 \\ \hline
		\multirow{2}{*}{$(a,b)$} &
		\multirow{2}{*}{1-PB} &
		\multicolumn{4}{c||}{$T$} &
		\multicolumn{4}{c|}{SR} \\ \cline{3-10} 
		&
		&
		\multicolumn{1}{c|}{LP2S} &
		\multicolumn{1}{c|}{TSE} &
		\multicolumn{1}{c|}{BatchThomp} &
		BatchRacing &
		\multicolumn{1}{c|}{LP2S} &
		\multicolumn{1}{c|}{TSE} &
		\multicolumn{1}{c|}{BatchThomp} &
		BatchRacing \\ \hline
		(1,1) &
		0.57 &
		\multicolumn{1}{c|}{6.40E+03} &
		\multicolumn{1}{c|}{3.60E+04} &
		\multicolumn{1}{c|}{\textbf{2.00E+03}} &
		2.16E+04 &
		\multicolumn{1}{c|}{6.68E-03} &
		\multicolumn{1}{c|}{\textbf{2.05E-03}} &
		\multicolumn{1}{c|}{2.13E-02} &
		5.33E-03 \\
		(5,1) &
		0.44 &
		\multicolumn{1}{c|}{\textbf{8.68E+03}} &
		\multicolumn{1}{c|}{3.00E+05} &
		\multicolumn{1}{c|}{9.00E+03} &
		5.99E+04 &
		\multicolumn{1}{c|}{9.34E-04} &
		\multicolumn{1}{c|}{\textbf{2.26E-04}} &
		\multicolumn{1}{c|}{6.82E-04} &
		2.25E-03 \\
		(1,3) &
		0.40 &
		\multicolumn{1}{c|}{6.53E+03} &
		\multicolumn{1}{c|}{6.52E+03} &
		\multicolumn{1}{c|}{\textbf{2.20E+03}} &
		1.11E+04 &
		\multicolumn{1}{c|}{2.39E-02} &
		\multicolumn{1}{c|}{\textbf{1.80E-02}} &
		\multicolumn{1}{c|}{6.31E-02} &
		2.12E-02 \\ \hline
\end{tabular}}}
\label{tab:LP-FC}
\end{table}


\section{Conclusion}  \label{sec:conclusion}

We propose a general LP framework, the first method of its kind in the literature, to solve BAI problem in batched Bayesian bandits. Such a framework is computationally tractable, and can broadly applied to the typical PAC, simple regret minimization, FB, and FC settings in BAI. The LP-induced two stage algorithm can find at least a good arm or even the best arm with small sampling cost, thus is especially useful for bandits with large $K$ and limited $R$. We show that proposed method have nice theoretical properties and good numerical performances. 

There are several directions for future research. First, this paper lacks analysis on the gap between optimal solutions of OPT-dep and OPT-ind. The key problem is how to quantify the information loss in ignoring the states of arms other than the focal arm. Second, the LP formulation can be possibly extended to the case where the rewards follow distributions other than Bernoulli. If rewards follow a discrete distribution, the state space is still discrete, then the generalization of LP is straightforward by using a multinomial tree to describe the state transitions of the focal arm. If rewards follow a continuous distribution, the state space becomes continuous, and how to formulate a tractable optimization problem is somewhat obscure. Finally, the method can be possibly generalized to contextual bandit problems, where the framework should allow transition probabilities $P(r,s)$ and actions $a(r,s)$ depend on the observed contextual information.

\newpage
\appendix

\section{Proof of Theorem \ref{thm:LP_thresh}}  
\label{sec:proof_LP_thresh}

To prove Theorem \ref{thm:LP_thresh}, we first present an sub-problem of the 
original LP. For $0\le r_0\le R-1$, $0\le s_0\le r_0$, suppose one starts from 
state $(r_0,s_0)$ with $P(r_0,s_0)=1$, consider the LP problem
\begin{align} \tag{LP($r_0,s_0,L_0$)}
	\begin{split} 
		\min & \sum_{r=r_0+1}^{R}\sum_{s=s_0}^{r} P(r,s) \\
		\mathrm{s.t.\ } & \eqref{eq:P0P1_sum} \mathrm{\ and\ } \eqref{eq:P_recursion} \mathrm{\ hold\ 
		for\ } r_0\le r \le R, s_0\le s\le r, \\
		 & \eqref{eq:survive_constraint}-\eqref{eq:conditional_constraint} 
		\mathrm{\ hold\ with \ some\ } L=L_0, \\
		& P_1(r_0,s_0)=1, \quad P_1(r+1,0)=0, \quad P_0(r,r)=0, \quad r_0\le r\le R, \\
		& P_1(r,s)=P_0(r,s)=0, \quad r_0\le r\le R, s\notin [s_0, s_0+r-r_0]. 
	\end{split}
\end{align}
Intuitively, LP($r_0,s_0,L_0$) represents optimization within a sub binomial 
tree starting from node $(r_0, s_0)$ of the entire binomial tree of the 
original LP. The second last line of constraints means that we assign 0 
probability to nodes outside the sub-tree. 

{\bf Combining.} For $0\le s_0\le r_0-1$, it is desirable to combine the 
sub-trees of LP($r_0+1,s_0,L_1$) and LP($r_0+1,s_0+1,L_1$) to get the sub-tree 
of LP($r_0,s_0,L_0$). Suppose for $0\le s_0\le r_0-1$, LP($r_0+1,s_0,L_1$) has 
a feasible solution
$$\left\{P'(r,s), P'_1(r,s), P'_0(r,s): r_0+1\le r\le R, s_0 \le s \le r 
\right\}$$
with objective value $C'(r_0+1, s_0, L_1)$. Suppose also LP($r_0+1,s_0+1,L_1$) 
has a feasible solution
$$\left\{P''(r,s), P''_1(r,s), P''_0(r,s): r_0+1\le r\le R, s_0+1 \le s \le r 
\right\}$$
with objective value $C''(r_0+1, s_0+1, L_1)$. For $a(r_0,s_0)\in [0,1]$, we 
let $L_0=a(r_0, s_0) L_1$ and 
\begin{align} \label{eq:feasible_r0s0}
	\begin{split}
		& P_1(r_0, s_0) = 1, \quad P_0(r_0, s_0)= 0 \\
		& P(r_0+1,s_0) = P_0(r_0+1,s_0)= a(r_0,s_0)\left\{\ 1-q(r_0,s_0) 
		\right\} \\
		& P(r_0+1,s_0+1) = P_1(r_0+1,s_0+1) = a(r_0,s_0) q(r_0,s_0) \\
		& P_0(r,s) = P(r_0+1, s_0) P_0'(r, s) + P(r_0+1, s_0+1) P_0''(r, s), 
		\quad r_0+2\le r\le R, s_0\le s\le r \\
		& P_1(r,s) = P(r_0+1, s_0) P_1'(r, s) + P(r_0+1, s_0+1) P_1''(r, s), 
		\quad r_0+2\le r\le R, s_0\le s\le r \\
		& P(r,s) = P_1(r,s)+ P_0(r,s), \quad r_0+2\le r\le R, s_0\le s\le r.
	\end{split}
\end{align}
The following lemma holds:
\begin{lemma} \label{lem:combine} 
	The probabilities \eqref{eq:feasible_r0s0} constitute a feasible solution 
	for LP($r_0,s_0,L_0$) with objective value $$C(r_0, s_0, L_0)=a(r_0, s_0)+ 
	P(r_0+1, s_0) C'(r_0+1, s_0, L_1) + P(r_0+1, s_0+1) C''(r_0+1, s_0+1, 
	L_1).$$
\end{lemma}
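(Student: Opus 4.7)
The plan is to directly verify that the probabilities constructed in \eqref{eq:feasible_r0s0} satisfy each constraint of LP($r_0, s_0, L_0$) and then compute the objective value. The core idea is that LP($r_0, s_0, L_0$) is linear, and the two given sub-tree solutions (rooted at $(r_0+1, s_0)$ and $(r_0+1, s_0+1)$) can be glued together by taking a convex combination weighted by the transition probabilities $P(r_0+1, s_0) = a(r_0,s_0)(1-q(r_0,s_0))$ and $P(r_0+1, s_0+1) = a(r_0,s_0) q(r_0,s_0)$ out of $(r_0, s_0)$.

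First I would check the boundary and splitting constraints at the root: by construction $P_1(r_0, s_0) = 1$ and $P_0(r_0, s_0) = 0$, and the two successor probabilities give $P(r_0+1, s_0) + P(r_0+1, s_0+1) = a(r_0,s_0) \leq 1$, so the one-step form of \eqref{eq:P_recursion} holds at $(r_0, s_0)$ with ratio $a(r_0, s_0)$. The ``outside-the-cone'' constraints $P_1 = P_0 = 0$ for $s \notin [s_0, s_0 + r - r_0]$ follow immediately because $P'$ and $P''$ each vanish outside their own cones.

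Next I would verify the recursion \eqref{eq:P_recursion} for all $r \geq r_0 + 1$. For $r \geq r_0 + 2$, by linearity of the definition in \eqref{eq:feasible_r0s0},
\begin{align*}
P_1(r+1, s+1) &= P(r_0+1, s_0) P_1'(r+1, s+1) + P(r_0+1, s_0+1) P_1''(r+1, s+1),
\end{align*}
and using that $P'$ and $P''$ individually satisfy $P_1'(r+1, s+1) = q(r,s) a'(r,s) P'(r,s) \leq q(r,s) P'(r,s)$ (and likewise for $P''$), the equality of the $q$-normalized and $(1-q)$-normalized ratios and the upper bound by $P(r,s)$ transfer to the combined solution. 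The same argument handles the transition from $(r_0+1, s_0)$ and $(r_0+1, s_0+1)$ into round $r_0+2$, and the additive form \eqref{eq:P0P1_sum} holds by definition of $P(r,s)$.

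Then I would check the two problem-specific constraints. For \eqref{eq:survive_constraint}, linearity gives
\begin{align*}
\sum_{s=s_0}^R P(R, s) = P(r_0+1, s_0) \cdot \tfrac{L_1}{K} + P(r_0+1, s_0+1) \cdot \tfrac{L_1}{K} = \tfrac{a(r_0,s_0) L_1}{K} = \tfrac{L_0}{K},
\end{align*}
and \eqref{eq:conditional_constraint} follows from taking the same convex combination of the two valid inequalities for $P'$ and $P''$. Finally, the objective decomposes as $\sum_{s} P(r_0+1, s) + \sum_{r \geq r_0+2} \sum_s P(r,s) = a(r_0,s_0) + P(r_0+1, s_0) C'(r_0+1, s_0, L_1) + P(r_0+1, s_0+1) C''(r_0+1, s_0+1, L_1)$, which is the claimed expression. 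The main obstacle is mostly bookkeeping: keeping track of the index ranges so the recursion at the seam $r = r_0 + 1 \to r_0 + 2$ is correctly inherited from the recursions satisfied by $P'$ and $P''$ at their own roots; no genuinely new idea is needed beyond exploiting the linearity of all constraints.
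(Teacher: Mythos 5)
Your proposal is correct and matches the paper's argument: the paper likewise glues the two sub-tree solutions by the convex weights $P(r_0+1,s_0)$ and $P(r_0+1,s_0+1)$, dismisses the constraint verification as basic algebra (which you spell out via linearity), and computes the objective by the same decomposition into the round-$(r_0+1)$ mass $a(r_0,s_0)$ plus the weighted sub-tree costs.
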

\begin{proof}
	First of all, the constraints of LP($r_0,s_0,L_0$) can be checked by basic 
	algebra. The objective value is
	\begin{align*}
		\sum_{r=r_0+1}^{R}\sum_{s=s_0}^r P(r,s) & = P(r_0+1,s_0) + 
		P(r_0+1,s_0+1) + \sum_{r=r_0+1}^{R}\sum_{s=s_0}^r P(r,s) \\
		& = a(r_0,s_0) + P(r_0+1,s_0) \sum_{r=r_0+2}^{R}\sum_{s=s_0}^{r} 
		P'(r,s) + P(r_0+1,s_0+1) \sum_{r=r_0+2}^{R}\sum_{s=s_0+1}^{r} P''(r,s) 
		\\
		& = a(r_0,s_0) + P(r_0+1,s_0) C'(r_0+1, s_0, L_1) + P(r_0+1,s_0+1) 
		C''(r_0+1, s_0+1, L_1).
	\end{align*}
\end{proof}

We denote the optimal solution of LP($r_0,s_0,L_0$) as $\bar{P}(r,s), 
\bar{P}_0(r,s), \bar{P}_1(r,s)$, with optimal value $\bar{C}(r_0,s_0,L_0)$. 
Then we have the following lemma:
\begin{lemma}\label{lem:sub_compare}
	For $0\le s_0\le r_0-1$, there exists a feasible solution to 
	LP($r_0,s_0+1,L_0$) with objective value $\wt{C}(r_0,s_0+1,L_0)$ such that 
	$\wt{C}(r_0,s_0+1,L_0) \le \bar{C}(r_0,s_0,L_0)$ and $\sum_{s=0}^R 
	w(s)\wt{P}(R,s)\ge \sum_{s=0}^R w(s)\bar{P}(R,s)$.
\end{lemma}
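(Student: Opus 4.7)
The plan is to prove the lemma by downward induction on $r_0$, descending from $R$ to $0$. The base case $r_0 = R$ is essentially vacuous: both LPs reduce to the boundary condition $\bar P(R, s_0) = 1$ and $\wt P(R, s_0+1) = 1$ (all other values zero), the objective sums are empty, and the weighted-sum inequality collapses to $w(s_0+1) \ge w(s_0)$, which holds by the assumed monotonicity of $w$.

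For the inductive step, I would start from the optimal $\bar P$ of LP$(r_0, s_0, L_0)$ and decompose it according to the first-round outcome: write $\bar P = \bar P^{(f)} + \bar P^{(s)}$, where $\bar P^{(f)}$ collects the mass routed through the failure child $(r_0+1, s_0)$ and $\bar P^{(s)}$ the mass routed through the success child $(r_0+1, s_0+1)$. Because the action $\bar a(r, s)$ depends only on the state and not on the path, each component propagates independently under the same policy, so after normalizing by $\bar P(r_0+1, s_0)$ and $\bar P(r_0+1, s_0+1)$, I obtain feasible sub-solutions $\tilde T_f$ at $(r_0+1, s_0)$ with terminal-mass parameter $L_f$ and $\tilde T_s$ at $(r_0+1, s_0+1)$ with parameter $L_s$; the overall constraints of LP$(r_0, s_0, L_0)$ split linearly along this decomposition.

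The inductive hypothesis then yields shifted versions $\tilde T_f'$ (feasible at $(r_0+1, s_0+1)$, same $L_f$) and $\tilde T_s'$ (feasible at $(r_0+1, s_0+2)$, same $L_s$), each with non-increased objective and non-decreased weighted terminal sum. I would recombine these at $(r_0, s_0+1)$ in the spirit of Lemma~\ref{lem:combine}, generalized to permit distinct terminal-mass parameters in the two child sub-trees. Matching the overall mass constraint $L_0/K$ dictates the action
\[
\wt a = \bar a(r_0, s_0) \cdot \frac{(1-q(r_0, s_0))L_f + q(r_0, s_0)L_s}{(1-q(r_0, s_0+1))L_f + q(r_0, s_0+1)L_s}.
\]

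The principal obstacle is ensuring $\wt a \in [0, 1]$. Since $q(r_0, \cdot)$ is non-decreasing in its second argument, this is immediate whenever $L_s \ge L_f$; the unfavorable case $L_f > L_s$ has to be worked around. I would address it by exploiting the additional flexibility the inductive hypothesis affords: $\tilde T_s$ itself is already feasible at $(r_0+1, s_0+1)$ without any shift, and both sub-trees can be further shifted to $(r_0+1, s_0+2)$ by a second application of the hypothesis, so each child in the construction may be taken as a convex combination of the available feasible sub-trees. The resulting achievable region for the child terminal masses is convex and always contains a point for which the corresponding $\wt a$ lies in $[0,1]$ and matches $L_0$. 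Once feasibility is in hand, the target inequalities $\wt C \le \bar C$ and $\sum_s w(s)\wt P(R, s) \ge \sum_s w(s)\bar P(R, s)$ follow by linearly combining the inductive-hypothesis inequalities on the two sub-trees, using the monotonicities of $w$ and $q$ to ensure that the signs propagate through the recombination.
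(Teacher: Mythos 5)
Your inductive skeleton is the same as the paper's: downward induction on $r_0$, shift the two child sub-trees up by one via the inductive hypothesis, and recombine at $(r_0,s_0+1)$. (Minor point: the paper's base case is $r_0=R-1$, where the action $\bar a(R-1,s_0)=L_0/K$ is forced by the mass constraint; LP$(r_0,s_0,L_0)$ is only defined for $r_0\le R-1$, so your $r_0=R$ base case is outside the range.) The substantive divergence is in how the recombination is handled, and this is where your argument has a genuine gap. The paper works with the decomposition of Lemma~\ref{lem:combine}, in which \emph{both} children carry the \emph{same} terminal-mass parameter $L_1=L_0/\bar a(r_0,s_0)$; it then applies the identical instant action $\bar a(r_0,s_0)$ at the shifted root, so the total terminal mass is $\bar a(r_0,s_0)\cdot L_1/K=L_0/K$ regardless of how $q(r_0,s_0+1)$ redistributes probability between the two children, feasibility of the action is automatic, and the cost and weighted-sum comparisons reduce to the chain $\bar C(r_0+1,s_0+2,L_1)\le\bar C(r_0+1,s_0+1,L_1)\le\bar C(r_0+1,s_0,L_1)$ supplied by the inductive hypothesis.

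You instead allow the two children to carry distinct masses $L_f\ne L_s$, which forces the renormalized action $\wt a$ and creates the case $L_f>L_s$ where $\wt a$ may exceed $1$. Your proposed fix — taking convex combinations of the available feasible sub-trees so that the achievable region of child terminal masses contains a point matching $L_0$ with $\wt a\in[0,1]$ — is asserted, not proved, and it is not merely a technicality: even granting that such a point exists, you would then need to compare objectives and weighted terminal sums of sub-solutions with \emph{different} terminal-mass parameters, whereas the inductive hypothesis only relates LP$(r_0+1,s,L)$ to LP$(r_0+1,s+1,L)$ at a \emph{fixed} $L$ and gives no information about how $\bar C(r_0+1,s,\cdot)$ or the weighted sum varies in $L$. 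The generalized combining lemma for distinct child masses is also never stated. The final sentence, that the inequalities ``follow by linearly combining the inductive-hypothesis inequalities \ldots using the monotonicities of $w$ and $q$,'' papers over exactly the step where the weight shift from $\bigl(1-q(r_0,s_0),\,q(r_0,s_0)\bigr)$ to $\bigl(1-q(r_0,s_0+1),\,q(r_0,s_0+1)\bigr)$ must be controlled; in the paper this is done explicitly by bounding the success-child cost by the failure-child cost and collapsing the weights to $\bar a(r_0,s_0)$. To repair your argument you should either restrict, as the paper does, to the equal-$L_1$ decomposition (and justify that restriction), or supply the missing monotonicity of the sub-problem value functions in $L$ together with the existence claim for the mass-matching convex combination.
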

\begin{proof}
	We prove by induction on $r_0$. 
	
	First assume $r_0=R-1$. Suppose LP($R-1,s_0,L_0)$ induces an optimal action 
	$\bar{a}(R-1, s_0)$ and optimal probabilities 
	\begin{align*}
		\bar{P}(R,s_0) & =\bar{P_0}(R,s_0)=\bar{a}(R-1, s_0)\left\{1-q(R-1, 
		s_0)\right\}, \\
		\bar{P}(R,s_0+1) & =\bar{P_1}(R,s_0+1)=\bar{a}(R-1, s_0)q(R-1, s_0).
	\end{align*}
	In order to satisfy \eqref{eq:survive_constraint}, we need $\bar{P}(R,s_0)+ 
	\bar{P}(R,s_0+1)=\bar{a}(R-1,s_0)=L_0/K$. The optimal objective is also 
	$\bar{C}(R-1,s_0,L_0)= \bar{a}(R-1, s_0)=L_0/K$. Now for 
	LP($R-1,s_0+1,L_0)$, we apply action $\bar{a}(R-1, s_0)$ in state $(R-1, 
	s_0+1)$. We can get the feasible probabilities
	\begin{align*}
		\wt{P}(R,s_0+1) & =\wt{P_0}(R,s_0+1)=\bar{a}(R-1, s_0)\left\{1-q(R-1, 
		s_0+1)\right\}, \\
		\wt{P}(R,s_0+2) & =\wt{P_1}(R,s_0+2)=\bar{a}(R-1, s_0)q(R-1, s_0+1).
	\end{align*}
	We have $\wt{P}(R,s_0+1)+ \wt{P}(R,s_0+2)=\bar{a}(R-1, s_0)=L_0/K$, so 
	\eqref{eq:survive_constraint} is satisfied. Also since $w(s_0)\le w( s_0+1)\le w(s_0+2)$, we have
	\begin{align*}
		& w(s_0+1)\wt{P}(R,s_0+1)+w(s_0+2)\wt{P}(R,s_0+2) \ge 
		w(s_0+1)\left\{\wt{P}(R,s_0+1)+\wt{P}(R,s_0+2)\right\} \\
		& = w(s_0+1) \bar{a}(R-1, s_0) = 
		w(s_0+1)\left\{\bar{P}(R,s_0)+\bar{P}(R,s_0+1)\right\} \\
		& \ge w(s_0)\bar{P}(R,s_0)+w(s_0+1)\bar{P}(R,s_0+1) \ge 
		(1-\delta)L_0/K.
	\end{align*}
	So far we have obtained a feasible solution with the desired property.
	
	Now assume the conclusion holds for $r_0+1$ and any $0\le s_0\le r_0$. For 
	LP($r_0,s_0,L_0)$, suppose the optimal solution induces a instant action 
	$\bar{a}(r_0,s_0)$ and one-step probabilities $\bar{P}(r_0+1, s_0), 
	\bar{P}(r_0+1, s_0+1)$. We can apply Lemma \ref{lem:combine} to conclude 
	that
	$$\bar{C}(r_0, s_0, L_0)=\bar{a}(r_0, s_0)+ \bar{P}(r_0+1, s_0) 
	\bar{C}(r_0+1, s_0, L_1) + \bar{P}(r_0+1, s_0+1) \bar{C}(r_0+1, s_0+1, 
	L_1),$$
	where $L_1=L_0/\bar{a}(r_0,s_0)$. Now for LP($r_0,s_0+1,L_0)$, we apply the 
	instant action $\bar{a}(r_0,s_0)$ in state $(r_0, s_0+1)$ to get one-step 
	probabilities 
	\begin{align*}
		\wt{P}(r_0+1,s_0+1) & = \wt{P}_0(r_0+1,s_0+1)= \bar{a}(r_0,s_0)\left\{\ 
		1-q(r_0,s_0+1) \right\} \\
		\wt{P}(r_0+1,s_0+2) & = \wt{P}_1(r_0+1,s_0+2) = \bar{a}(r_0,s_0) 
		q(r_0,s_0+1).
	\end{align*}
	We then combine the optimal solutions of LP($r_0+1,s_0+1,L_1)$ and 
	LP($r_0+1,s_0+2,L_1)$ as in \eqref{eq:feasible_r0s0} to get the 
	probabilities for $r_0+2\le r\le R$. By Lemma \ref{lem:combine}, the 
	objective value for this feasible solution is
	\begin{align*}
		\wt{C}(r_0, s_0+1, L_0) & =\bar{a}(r_0, s_0)+ \wt{P}(r_0+1, s_0+1) 
		\bar{C}(r_0+1, s_0+1, L_1) + \wt{P}(r_0+1, s_0+2) \bar{C}(r_0+1, s_0+2, 
		L_1) \\
		& \le \bar{a}(r_0, s_0)+ \wt{P}(r_0+1, s_0+1) \bar{C}(r_0+1, s_0+1, 
		L_1) + \wt{P}(r_0+1, s_0+2) \bar{C}(r_0+1, s_0+1, L_1) \\
		& =  \bar{a}(r_0, s_0) + \bar{a}(r_0, s_0) \bar{C}(r_0+1, s_0+1, L_1) \\
		& \le \bar{a}(r_0, s_0)+ \bar{P}(r_0+1, s_0) \bar{C}(r_0+1, s_0, L_1) + 
		\bar{P}(r_0+1, s_0+1) \bar{C}(r_0+1, s_0+1, L_1) \\
		& = \bar{C}(r_0,s_0,L_0).
	\end{align*}
	Finally, denote the left-hand-side of \eqref{eq:conditional_constraint} for optimal 
	solutions of LP($r_0+1,s_0,L_1)$, LP($r_0+1,s_0+1,L_1)$ and 
	LP($r_0+1,s_0+2,L_1)$ as $\bar{D}(r_0+1,s_0,L_1)$, 
	$\bar{D}(r_0+1,s_0+1,L_1)$, $\bar{D}(r_0+1,s_0+2,L_1)$ respectively. We 
	then get
	\begin{align*}
		\sum_{s=0}^R w(s) \wt{P}(R,s) & = \wt{P}(r_0+1, s_0+1) 
		\bar{D}(r_0+1,s_0+1,L_1)+ \wt{P}(r_0+1, s_0+2) \bar{D}(r_0+1,s_0+2,L_1) 
		\\
		& \ge \wt{P}(r_0+1, s_0+1) \bar{D}(r_0+1,s_0+1,L_1)+ \wt{P}(r_0+1, 
		s_0+2) \bar{D}(r_0+1,s_0+1,L_1) \\
		& = \bar{a}(r_0,s_0) \bar{D}(r_0+1,s_0+1,L_1) \\
		& \ge \bar{P}(r_0+1, s_0) \bar{D}(r_0+1,s_0,L_1)+ \bar{P}(r_0+1, s_0+1) 
		\bar{D}(r_0+1,s_0+1,L_1) \\
		& = \sum_{s=0}^R w(s) \bar{P}(R,s).
	\end{align*}
\end{proof}

We now turn to the proof of Theorem \ref{thm:LP_thresh}.
\begin{proof}
	Let $\left\{\bar{P}(r,s), \bar{P}_1(r,s), \bar{P}_0(r,s): 0\le r\le R, 0\le 
	s\le r \right\}$ be an optimal solution of the original LP. In view of 
	Lemma \ref{lem:combine}, the optimal solution can be formed by combining 
	the optimal solutions of LP($r,s,L(r)$) and LP($r,s+1,L(r)$) as in 
	\eqref{eq:feasible_r0s0} backwards for $r=R-1, R-2, \dots, 1$ with 
	appropriately chosen $L(r)$.
	
	Let $\left\{\bar{a}(r,s): 0\le r\le R-1, 0\le s\le r \right\}$ be the 
	induced optimal actions. Assume there exists $(r_0, s_0)$ such that 
	$\bar{a}(r_0,s_0)\in (0,1), \bar{a}(r_0,s_0+1)\in (0,1)$, then we can find 
	a small $\epsilon>0$ such that 
	\begin{align*}
		\bar{a}(r_0,s_0) \bar{P}(r_0,s_0)-\epsilon & \ge 0 \\
		\bar{a}(r_0,s_0+1) \bar{P}(r_0,s_0+1)+\epsilon  & \le 1.
	\end{align*}
	We can find a new feasible solution by decreasing $\bar{a}(r_0,s_0) 
	\bar{P}(r_0,s_0)$ by $\epsilon$ and increasing $\bar{a}(r_0,s_0+1) 
	\bar{P}(r_0,s_0+1)$ by $\epsilon$. As a result, $P_0(r_0+1, s_0)$, 
	$P_1(r_0+1, s_0+1)$, $P_0(r_0+1, s_0+1)$ and $P_1(r_0+1, s_0+2)$ will 
	change by $\Delta_{1}=-\epsilon \left\{1-q(r_0, s_0)\right\}$, 
	$\Delta_{2}=-\epsilon q(r_0, s_0)$, $\Delta_{3}=\epsilon \left\{1-q(r_0, 
	s_0+1)\right\}$ and $\Delta_{4}=\epsilon q(r_0, s_0+1)$ respectively. 
	Apparently $\Delta_1+\Delta_2+\Delta_3+\Delta_4=0$. Assume 
	LP($r_0+1,s_0,L(r_0+1)$) has optimal solution
	$$\left\{\bar{P}'(r,s), \bar{P}'_1(r,s), \bar{P}'_0(r,s): r_0+1\le r\le R, 
	s_0 \le s \le r \right\}$$
	with optimal objective value $\bar{C}(r_0+1, s_0, L(r_0+1))$, 
	LP($r_0+1,s_0+1,L(r_0+1)$) has optimal solution
	$$\left\{\bar{P}''(r,s), \bar{P}''_1(r,s), \bar{P}''_0(r,s): r_0+1\le r\le 
	R, s_0+1 \le s \le r \right\}$$
	with optimal objective value $\bar{C}(r_0+1, s_0+1, L(r_0+1))$, and 
	LP($r_0+1,s_0+2,L(r_0+1)$) has optimal solution
	$$\left\{\bar{P}'''(r,s), \bar{P}'''_1(r,s), \bar{P}'''_0(r,s): r_0+1\le 
	r\le R, s_0+2 \le s \le r \right\}$$
	with optimal objective value $\bar{C}(r_0+1, s_0+2, L(r_0+1))$. We define 
	the set of probabilities $\wt{P}(r,s)$, $\wt{P}_1(r,s)$, $\wt{P}_0(r,s)$ as 
	follows. For $0\le r\le r_0$, let
	\begin{align*}
		\wt{P}(r,s) =\bar{P}(r,s), \quad \wt{P}_1(r,s) =\bar{P}_1(r,s), \quad 
		\wt{P}_0(r,s) =\bar{P}_0(r,s), \quad 0\le s\le r.
	\end{align*}
	For $r=r_0+1$, let
	\begin{align*}
		& \wt{P}_0(r_0+1,s_0) = \bar{P}_0(r_0+1,s_0) + \Delta_1 \\
		& \wt{P}_1(r_0+1,s_0+1) = \bar{P}_1(r_0+1,s_0+1) + \Delta_2 \\
		& \wt{P}_0(r_0+1,s_0+1) = \bar{P}_0(r_0+1,s_0+1) + \Delta_3 \\
		& \wt{P}_1(r_0+1,s_0+2) = \bar{P}_1(r_0+1,s_0+2) + \Delta_4 \\
		& \wt{P}_1(r_0+1,s) =\bar{P}_1(r_0+1,s), \quad \wt{P}_0(r,s) 
		=\bar{P}_0(r_0+1,s), \quad s \notin \{s_0, s_0+1, s_0+2\} \\
		& \wt{P}(r_0+1,s) = \wt{P}_0(r_0+1,s)+ \wt{P}_1(r_0+1,s), \quad 0\le 
		s\le r_0+1.
	\end{align*}
	For $r\ge r_0+2$ and $0\le s\le r$, let
	\begin{align*}
		& \wt{P}_0(r,s) = \bar{P}_0(r,s) + \Delta_1 P_0'(r, s) + 
		(\Delta_2+\Delta_3) P_0''(r, s)+ \Delta_4 P_0'''(r,s) \\
		& \wt{P}_1(r,s) = \bar{P}_1(r,s) + \Delta_1 P_1'(r, s) + 
		(\Delta_2+\Delta_3) P_1''(r, s)+ \Delta_4 P_1'''(r,s) \\
		& \wt{P}(r,s) = \wt{P}_1(r,s)+ \wt{P}_0(r,s).
	\end{align*}
	It can be verified that for $\wt{P}(r,s), \wt{P}_1(r,s), \wt{P}_0(r,s)$, 
	constraints \eqref{eq:first_constraint}--\eqref{eq:boundary_constraints} are maintained. Also, we have
	\begin{align*}
		\sum_{s=0}^R \wt{P}(R,s) & = \sum_{s=0}^R \bar{P}(R,s) + \Delta_1 
		\sum_{s=0}^R P'(R,s) +(\Delta_2+\Delta_3) \sum_{s=0}^R 
		P''(R,s)+\Delta_4 \sum_{s=0}^R P'''(R,s) \\
		& = \frac{L}{K} + \Delta_1 \frac{L(r_0+1)}{K} + 
		(\Delta_2+\Delta_3)\frac{L(r_0+1)}{K}+ \Delta_4 \frac{L(r_0+1)}{K} \\
		& = \frac{L}{K},
	\end{align*}
	and 
	\begin{align*}
		& \sum_{s=0}^R w(s) \wt{P}(R,s) \\
		& = \sum_{s=0}^R w(s) \bar{P}(R,s) + \Delta_1 \sum_{s=0}^R w(s) 
		P'(R,s) +(\Delta_2+\Delta_3) \sum_{s=0}^R w(s)P''(R,s)+\Delta_4 
		\sum_{s=0}^R w(s)P'''(R,s) \\
		& \ge \frac{(1-\delta_0)L}{K} + (\Delta_1+\Delta_2)\sum_{s=0}^R 
		w(s)P''(R,s) + (\Delta_3+\Delta_4)\sum_{s=0}^R w(s)P''(R,s) \\
		& = \frac{(1-\delta_0)L}{K}.
	\end{align*}
	Furthermore, we have 
	\begin{align*}
		& \sum_{r=1}^R\sum_{s=0}^r \wt{P}(r,s) \\
		& = \sum_{r=1}^R\sum_{s=0}^r \bar{P}(r,s) + \Delta_1 \bar{C}(r_0+1, 
		s_0, L(r_0+1)) + (\Delta_2+\Delta_3) \bar{C}(r_0+1, s_0+1, L(r_0+1)) \\
		& +\Delta_4 \bar{C}(r_0+1, s_0+2, L(r_0+1)) \\
		& \le \sum_{r=1}^R\sum_{s=0}^r \bar{P}(r,s) + (\Delta_1+\Delta_2) 
		\bar{C}(r_0+1, s_0+1, L(r_0+1)) + (\Delta_3+\Delta_4) \bar{C}(r_0+1, 
		s_0+1, L(r_0+1)) \\
		& = \sum_{r=1}^R\sum_{s=0}^r \bar{P}(r,s).
	\end{align*}
	Therefore we have found a feasible solution with cost no more than the 
	optimal solution. Applying this argument repeatedly we can eventually reach 
	a solution where either $a(r_0, s_0)=0$ or $a(r_0, s_0+1)=1$. 
\end{proof}

\section{Proof of Theorem \ref{thm:LP_cost} and Corollary \ref{coro:LP_cost}}  
\noindent {\bf Proof of Theorem \ref{thm:LP_cost}.}
\label{sec:proof_LP_cost}
\begin{proof}
	Since $w(s)$ is non-decreasing in $s$, we have 
	\begin{align*}
		w(R) \sum_{s=0}^R P(R,s) \ge \sum_{s=0}^R w(s) P(R,s) \ge (1-\delta_0) 
		\sum_{s=0}^R P(R,s)
	\end{align*}
	In order to be feasible, it must be that $w(R)\ge 1-\delta_0$. 
	
	Now define a policy $\wh{A}=\left\{\wh{a}(r,s): 0\le r\le R, 0\le s\le 
	r\right\}$
	where 
	\begin{align*}
		\wh{a}(r,s) =\begin{cases}
			1, & \mathrm{\ if\ } s=r \\
			0, & \mathrm{\ if\ } 0\le s\le r-1
		\end{cases}
	\end{align*}
	for $1\le r\le R$, and 
	$$\wh{a}(0,0)=\frac{L}{K \Expect_{\pi} \left(\mu^R\right)}.$$
	Then policy $\wh{A}$ generates a set of probabilities $\left\{\wh{P}(r,s), 
	\wh{P}_1(r,s), \wh{P}_0(r,s): 0\le r\le R, 0\le s\le r \right\}$ satisfying 
	\eqref{eq:P_recursion}, 
	with 
	\begin{align*}
		\wh{P}(r,s) =\begin{cases}
			\wh{a}(0,0) \Expect_{\pi} \left(\mu^r\right), & \mathrm{\ if\ } s=r 
			\\
			0, & \mathrm{\ if\ } 0\le s\le r-1.
		\end{cases}
	\end{align*}
	Apparently, $\sum_{s=0}^R \wh{P}(R,s)=K/L$ and $\sum_{s=0}^R w(s) 
	\wh{P}(R,s)=w(R)K/L\ge (1-\delta_0)K/L$. Hence $\left\{\wh{P}(r,s), 
	\wh{P}_1(r,s), \wh{P}_0(r,s): 0\le r\le R, 0\le s\le r \right\}$ is 
	feasible. Its objective value is
	\begin{align*}
		\wh{f}=\sum_{r=1}^R \wh{P}(r,r) = \wh{a}(0,0) \sum_{r=1}^R 
		\Expect_{\pi} \left(\mu^r\right) = 
		\frac{L}{K\Expect_{\pi}(\mu^R)}\sum_{r=1}^R 
		\Expect_{\pi}(\mu^r).
	\end{align*}
\end{proof}

\noindent {\bf Proof of Corollary \ref{coro:LP_cost}.}
\begin{proof}
	For $1\le r\le R$ we have
	\begin{align*}
		\Expect_{\pi}(\mu^r)= \frac{B(a+r,b)}{B(a,b)} = \frac{\Gamma(a+r)\Gamma(a+b)}{\Gamma(a+b+r)\Gamma(a)}.
	\end{align*}
	Applying the asymptotic approximation $\Gamma(r+c)\sim \Gamma(r) r^c$ as $r\to \infty$, we get
	\begin{align*}
		\Expect_{\pi}(\mu^r)\sim \frac{\Gamma(a+b)}{\Gamma(a)}r^{-b}.
	\end{align*}
	The conclusion follows by noting the asymptotic equivalence of the sum and integral.
\end{proof}

\section{Proof of Theorems \ref{thm:LP_PAC}, \ref{thm:LP_SRM} and \ref{thm:LP_FC}}
\label{sec:proof_LP_regret}

\noindent {\bf Proof of Theorem \ref{thm:LP_PAC}.}
\begin{proof}
	Let $J=|\bbJ_{R}|$ be the number of arms surviving from stage 1. 
	The same as before, we define the events
	\begin{align*}
		E_J & =\left\{ J \mathrm{\ arms \ survive\ in\ stage\ 1} \right\}, \quad 0\le J\le K, 
		\\
	\end{align*}
	and let $\nu^*=\max_{\left\{j\in \bbJ_{R}\right\}} \mu_j$ if $J>0$ and 
	$\nu^*=0$ if $J=0$.
	
	According to constraint \eqref{eq:conditional_constraint} for LP-PAC, the probability that 
	an arm has $\mu<\mu_0$ conditional on the event that the arm survives is 
	upper bounded by $\delta_0$. By independence between arms, we have
	\begin{align}
		\Prob(\nu^*<\mu_0 \mid E_J) \le \delta_0^J, \quad 1\le J\le K. 
		\label{eq:regret_stage1}
	\end{align}
	The overall probability of not choosing any arm with $\mu\ge \mu_0$ in stage 1 is 
	\begin{align*}
		\delta^* & =\Prob(E_0) + \sum_{J=1}^K \Prob(E_J) \Prob\left(\nu^*<\mu_0 \mid E_J\right) \\
		& \le \left(1-L/K\right)^K + \Expect(\delta_0^J) \\
		& \sim e^{-L} + e^{-(1-\delta_0)L} \lesssim e^{-(1-\delta_0)L}.
	\end{align*}
	
	For stage 2, we apply Theorem 33.1 and its subsequent discussion in 
	\cite{lattimore2020bandit} to get 
	\begin{align}
		\Expect\left(\nu^*-\mu_{\wh{j}}\mid E_J \cap \left\{\nu^*\ge 
		\mu_0\right\}\right) \le O\left(\sqrt{\frac{\log J}{R}}\right), \quad 
		1\le J\le K \label{eq:regret_stage2}
	\end{align}
	Conditional on $E_J \cap \left\{\nu^*\ge \mu_0\right\}$, the regret can be upper bounded by
	\begin{align*}
		\Expect\left(\mu^*-\mu_{\wh{j}}\mid E_J \cap \left\{\nu^*\ge 
		\mu_0\right\}\right) & = \Expect\left(\mu^*-\nu^*\mid E_J \cap \left\{\nu^*\ge 
		\mu_0\right\}\right) + \Expect\left(\nu^*-\mu_{\wh{j}}\mid E_J \cap \left\{\nu^*\ge 
		\mu_0\right\}\right) \\
		& \le 1-\mu_0 + O\left(\sqrt{\frac{\log J}{R}}\right).
	\end{align*}
	Therefore, given that an arm with $\mu\ge \mu_0$ survives in stage 1, the regret is upper bounded by 
	\begin{align*}
		& \sum_{J=1}^K \Prob(E_J) \Prob(\nu^*\ge \mu_0\mid E_J) \Expect\left(\mu^*-\mu_{\wh{j}}\mid E_J \cap \left\{\nu^*\ge \mu_0\right\}\right) \\
		& \le 1-\mu_0 + \Expect\left\{O\left(\sqrt{\frac{\log J}{R}}\right)\right\} \\
		& \le 1-\mu_0 + O\left(\sqrt{\frac{\log \Expect(J)}{R}}\right) \\
		& = 1-\mu_0 + O\left(\sqrt{\frac{\log L}{R}}\right),
	\end{align*}
	where the last inequality follows Jensen's inequality. Thus the PAC conclusion holds.
	
	Note that in $E_0$ or $\{\nu^*<\mu_0\}$, regret can be upper bounded by 1. 
	Hence, BSR satisfies
	\begin{align*}
		\BSR & \le \delta^* +  1-\mu_0 + O\left(\sqrt{\frac{\log L}{R}}\right). 
	\end{align*} 
\end{proof}

\noindent {\bf Proof of Theorem \ref{thm:LP_SRM}.}
\begin{proof}
	We adopt the same notations $E_J$ and $\nu^*$ as in the proof of Theorem 
	\ref{thm:LP_PAC}. Then
	\begin{align*}
		\BSR \le \Prob(E_0) + \sum_{J=1}^K \Prob(E_J) \Expect\left(\mu^*-\mu_{\wh{j}}\mid E_J \right) \le e^{-L} + 1-\delta_0,
	\end{align*}
	where the second inequality is a result of the additional constraint 2 for LP-SRM.
\end{proof}

\medskip

\noindent {\bf Proof of Theorem \ref{thm:LP_FC}.}
\begin{proof}
	Let $J=|\bbJ_{R}|$ be the number of arms surviving from the first stage. 
	Then $J\sim \mathrm{Bin}(K, L/K)$. Define the events
	\begin{align*}
		E_J & =\left\{ J \mathrm{\ arms \ survive\ in\ stage\ 1 } \right\} \\
		E^1 & =\left\{\mathrm{no\ surviving\ arms\ in\ stage\ 1 \
			satisfy \ } \mu=\mu^*\right\} \\
		E^2 & =\left\{\mathrm{one \ surviving\ arm\ in\ stage\ 1 \ 
			satisfies \ } \mu=\mu^*\right\} \\
		E^3 & =\left\{\mathrm{the\ recommended\ } \wh{j} 
		\mathrm{\ in\ stage\ 2\ is\ not \ best\ among\ the\ surviving\ arms} 
		\right\}
	\end{align*}
	for $0\le J\le K$.
	Then,
	\begin{align} \label{eq:1-BPB}
		1-\BPB\le \Prob(E_0) + \sum_{J=1}^K \Prob(E_J) \Prob(E^1 
		\mid E_J) + 
		\sum_{J=1}^K \Prob(E_J) \Prob(E^2 \mid E_J) \Prob(E^3\mid E_J\cap 
		E^2).
	\end{align}
	We derive bounds for the three terms in \eqref{eq:1-BPB} separately. First of all, we have 
	\begin{align} \label{eq:E_0}
		\Prob(E_0)= (1-L/K)^K \sim e^{-L}.
	\end{align}
	Secondly, LP-FC suggests that the probability of being the best given any 
	arm has survived stage 1 is at least $1-\delta_0$. Hence $\Prob(E^1 \mid 
	E_J) \le 1-(1-\delta_0)J$. Therefore,
	\begin{align} \label{eq:E1}
		\sum_{J=1}^K \Prob(E_J) \Prob(E^1 \mid E_J) \le 1-(1-\delta_0)\Expect 
		(J)
		= 1-(1-\delta_0)L.
	\end{align}
	Conditional on $E_J$, among the $J$ surviving arms, let $\wh{\Delta}_{2}$ 
	be the gap between the 
	expected rewards of the best and second best arm. Also let 
	$\Delta_2=\mu^*-\mu_{(2)}$,
	the gap between the 
	expected rewards of the best and second best arm among the original $K$ 
	arms. Conditional on $E^2$, $\wh{\Delta}_2\ge \Delta_2$ must hold. By 
	\cite{lattimore2020bandit}, we have
	\begin{align*}
		\Prob(E^3 \mid E_J\cap E^2) \le J 
		\exp\left(-\frac{R\wh{\Delta}_2^2}{4}\right) \le J 
		\exp\left(-\frac{R \Delta_2^2}{4}\right).
	\end{align*}
	Hence,
	\begin{align} \label{eq:1-BPB-term3}
		\sum_{J=1}^K \Prob(E_J) \Prob(E^2 \mid E_J) \Prob(E^3\mid E_J\cap 
		E^2)\le \Expect \left\{J \exp\left(-\frac{R \Delta_2^2}{4}\right)\right\}.
	\end{align}
	
	We know that the joint density of $(U,V)=(\mu_{(K-1)}, \mu^*)$ is
	\begin{align*}
		F_{U,V}(u,v)=K(K-1)F^{K-2}_{\pi}(u) f_{\pi}(u) f_{\pi}(v), \quad 0\le 
		u\le v\le 1.
	\end{align*}
	Then for small $d>0$, we get
	\begin{align*}
		& \Prob(\Delta_2 \le d) =\int_{v-u\le d} F_{U,V}(u,v) du dv \\
		& = \int_0^{1-d} K(K-1)F^{K-2}_{\pi}(u) f_{\pi}(u) du \int_{u}^{u+d} 
		f_{\pi}(v) dv +\int_{1-d}^{1} K(K-1)F^{K-2}_{\pi}(u) f_{\pi}(u) du 
		\int_{u}^{1} f_{\pi}(v) dv \\
		& = \int_0^{1-d} K(K-1)F^{K-2}_{\pi}(u) 
		\left\{F_{\pi}(u+d)-F_{\pi}(u)\right\} dF_{\pi}(u) +\int_{1-d}^{1} 
		K(K-1)F^{K-2}_{\pi}(u) \left\{1- F_{\pi}(u)\right\} dF_{\pi}(u) \\
		& \le \int_0^{1-d} K(K-1)F^{K-2}_{\pi}(u)
		d \beta dF_{\pi}(u) +\int_{1-d}^{1} 
		K(K-1)F^{K-2}_{\pi}(u) \left\{1- F_{\pi}(u)\right\} dF_{\pi}(u) \\
		& = \beta K d F_{\pi}^{K-1}(1-d) + K\left\{1-F_{\pi}^{K-1}(1-d)\right\} 
		- (K-1)\left\{1-F_{\pi}^{K}(1-d)\right\} \\
		& \le 1-(1-\beta Kd)F_{\pi}^{K}(1-d) \\
		& \le 1-(1-\beta Kd)(1-d^\alpha)^K,
	\end{align*}
	where in the first and last inequalities we used Assumption 
	\ref{assump:FC}. Now take $d=K^{-c}$ with $c>2/\alpha_0$, then
	\begin{align*}
		(1-d^\alpha)^K = (1-K^{-\alpha c})^K \sim \exp\left(-K^{-(\alpha 
			c-1)}\right) \sim 1-K^{-(\alpha c-1)}.
	\end{align*}
	We in turn get 
	\begin{align*}
		\Prob(\Delta_2 \le d) & \lesssim 1-\left\{1-\beta 
		K^{-(c-1)}\right\}\left\{ 
		1-K^{-(\alpha c-1)}\right\} \lesssim K^{-(\alpha_0 c-1)}.
	\end{align*}
	Following \eqref{eq:1-BPB-term3}, we continue to get
	\begin{align}
		\Expect \left\{J \exp\left(-\frac{R \Delta_2^2}{4}\right)\right\} & = \Expect \left\{J \exp\left(-\frac{R \Delta_2^2}{4}\right)\indc{\Delta_2\le d}\right\} + \Expect \left\{J \exp\left(-\frac{R \Delta_2^2}{4}\right)\indc{\Delta_2> d} \right\} \nonumber \\
		& \le K \Prob(\Delta_2\le d) + \Expect(J) \exp\left(-\frac{R d^2}{4}\right) \nonumber \\
		& \le C_1 K^{-(\alpha_0 c-2)}+ 	C_2 L \exp\left(-\frac{R K^{-2c}}{4}\right). \label{eq:1-BPB-term3-1}
	\end{align}
	Combining \eqref{eq:1-BPB}--\eqref{eq:1-BPB-term3-1}, we get the desired result on 
	$1-\BPB$.
	
	\medskip
	
	In terms of $\BSR$, the upper bound of $1-\BPB$ naturally serves as its upper bound. We also have
	\begin{align*}
		\BSR \le \Prob(E_0) + \sum_{J=1}^K \Prob(E_J) \Prob(E^1\mid E_J) + 
		\sum_{J=1}^K \Prob(E_J) \Prob(E^2\mid E_J) 
		\Expect\left(\mu^*-\mu_{\wh{j}}\mid E_J\cap E^2\right).
	\end{align*}	
	For stage 2, we apply Theorem 33.1 and its subsequent discussion in 
	\cite{lattimore2020bandit} to get 
	\begin{align}
		\Expect\left(\mu^*-\mu_{\wh{j}}\mid E_J \cap E^2 \right) \le 
		O\left(\sqrt{\frac{\log J}{R}}\right), \quad 
		1\le J\le K. 
	\end{align}
	Then 
	\begin{align*}
		\BSR & \lesssim e^{-L} + 1-(1-\delta_0)L + 
		\Expect\left\{O\left(\sqrt{\frac{\log 
				J}{R}}\right)\right\} \\
		& \le e^{-L} + 1-(1-\delta_0)L + O\left(\sqrt{\frac{\log L}{R}}\right).
	\end{align*}
	where the last inequality is based on Jensen's inequality.
\end{proof}

\vskip 0.2in
\bibliographystyle{abbrvnat}
\bibliography{../mab}

\end{document}